\newtheorem{thm}{Theorem}
\newtheorem{cor}{Corollary}
\newtheorem{lem}{Lemma}
\newtheorem{assumption}{Assumption}
\newcommand{\figurewidth}{2.9in}
\newcommand{\tilefigwidth}{2.1in}
\long\def\comment#1{}
\title{Norm-Preservation: Why Residual Networks Can Become Extremely Deep?}
\author{Alireza~Zaeemzadeh,~\IEEEmembership{Student Member,~IEEE,}
        Nazanin~Rahnavard,~\IEEEmembership{Senior Member,~IEEE,}
        and~Mubarak~Shah,~\IEEEmembership{Fellow,~IEEE}
\thanks{
The authors are with the School of Electrical Engineering and Computer Science, University of Central Florida, Orlando, FL 32816 USA (e-mail: zaeemzadeh@eecs.ucf.edu; nazanin@eecs.ucf.edu; shah@crcv.ucf.edu).
}
}
\begin{abstract}
Augmenting neural networks with skip connections, as introduced in the so-called ResNet architecture, surprised the community by enabling the training of networks of more than 1,000 layers with significant performance gains. This paper deciphers ResNet by analyzing the effect of skip connections, and puts forward new theoretical results on the advantages of identity skip connections in neural networks. We prove that the skip connections in the residual blocks facilitate preserving the norm of the gradient, and lead to stable back-propagation, which is desirable from optimization perspective. We also show that, perhaps surprisingly, as more residual blocks are stacked, the norm-preservation of the network is enhanced. Our theoretical arguments are supported by extensive empirical evidence.

Can we push for extra norm-preservation? We answer this question by proposing an efficient method to regularize the singular values of the convolution operator and making the ResNet's transition layers extra norm-preserving. Our numerical investigations demonstrate that the learning dynamics and the classification performance of ResNet can be improved by making it even more norm preserving. Our results and the introduced modification for ResNet, referred to as Procrustes ResNets, can be used as a guide for training deeper networks and can also inspire new deeper architectures.

\end{abstract}
\begin{document}
\maketitle

\section{Introduction}
\label{sec:intro}
 Deep neural networks have progressed rapidly during the last few years, achieving outstanding, sometimes super human, performance \cite{Silver2017MasteringKnowledge}. It is known that the depth of the network, i.e., number of stacked layers, is of decisive significance. It is shown that as the networks become deeper, they are capable of representing more complex mappings \cite{Montufar2014OnNetworks}. However, deeper networks are notoriously harder to train. As the number of layers is increased, optimization issues arise and, in particular,  avoiding vanishing/exploding gradients is essential to optimization stability of such networks. Batch normalization, regularization, and initialization techniques have shown to be useful remedies for this problem \cite{He2015DelvingClassification,Ioffe2015BatchShift}. \par 

Furthermore, it has been observed that as the networks become increasingly deep, the performance gets saturated or even deteriorates \cite{He2016DeepRecognition}. This problem has been addressed by many recent network designs \cite{He2016IdentityNetworks,He2016DeepRecognition,Srivastava2015TrainingNetworks,Huang2017DenselyNetworks}. All of these approaches use the same design principle: skip connections. This simple trick makes the information flow across the layers easier, by bypassing the activations from one layer to the next using skip connections. Highway Networks \cite{Srivastava2015TrainingNetworks}, ResNets \cite{He2016IdentityNetworks,He2016DeepRecognition}, and DenseNets \cite{Huang2017DenselyNetworks} have consistently achieved state-of-the-art performances by using skip connections in different network topologies. The main goal of skip connection is to enable the information to flow through many layers without attenuation. In all of these efforts, it is observed empirically that it is crucial to keep the information path \emph{clean} by using identity mapping in the skip connection. It is also observed that more complicated transformations in the skip connection lead to more difficulty in optimization, even though such transformations have more representational capabilities \cite{He2016IdentityNetworks}. 
This observation implies that \emph{identity} skip connection, while provides adequate representational ability, has a great feature of optimization stability, enabling deeper well-behaved networks.\par

Since the introduction of Residual Networks (ResNets)~\cite{He2016IdentityNetworks,He2016DeepRecognition}, there have been some efforts on understanding how the residual blocks may help the optimization process and how they improve the representational ability of the networks. Authors in \cite{Orhan2018SkipSingularities} showed that skip connection eliminates the singularities caused by the model non-identifiability. This makes the optimization of deeper networks feasible and faster. Similarly, to understand the optimization landscape of ResNets, authors in \cite{Hardt2017IdentityLearning} prove that linear residual networks have no critical points other than the global minimum. This is in contrast to plain linear networks, in which other critical points may exist \cite{Kawaguchi2016DeepMinima}. Furthermore, authors in \cite{Balduzzi2017TheQuestion} show that as depth increases, gradients of plain networks resemble white noise and become less correlated. This phenomenon, which is referred to as {\em shattered gradient} problem, makes training more difficult. Then, it is demonstrated that residual networks reduce shattering, compared to plain networks, leading to numerical stability and easier optimization. \par  

In this paper, we present and analytically study another desirable effect of identity skip connection:  \emph{the norm preservation of error gradient}, as it propagates in the backward path. We show theoretically and empirically that each residual block in ResNets is \emph{increasingly norm-preserving}, as the network becomes \emph{deeper}. 
This interesting result is in contrast to hypothesis provided in \cite{Veit2016ResidualNetworks}, which states that residual networks avoid vanishing gradient \emph{solely} by shortening the effective path of the gradient. \par 

Furthermore, we show that identity skip connection enforces the norm-preservation during the training, leading to well-conditioning and easier training. This is in contrast to  the initialization techniques, in which the initialization distribution is modified to make the training easier \cite{He2015DelvingClassification,Glorot2010UnderstandingNetworks}. This is done by keeping the variance of weights gradient the same across layers. However, as observed in \cite{Glorot2010UnderstandingNetworks} and verified by our experiments, using such initialization methods, although the network is initially fairly norm-preserving, the norms of the gradients diverge as training progresses.  \par 

We analyze the role of identity mapping as skip connection in the ResNet architecture from a theoretical perspective. Moreover, we use the insight gained from our theoretical analysis to propose modifications to some of the building blocks of the ResNet architecture. Two main contributions of this paper are as follows. \par 

\begin{itemize}
\item \textbf{Proof of the Norm Preservation of ResNets}: We show that having identity mapping in the shortcut path leads to norm-preserving building blocks. Specifically, identity mapping shifts all the singular values of the transformations towards $1$. This makes the optimization of the network much easier by preserving the magnitude of the gradient across the layers. Furthermore, we show that, perhaps surprisingly, {\em as the network becomes deeper, its building blocks become more norm-preserving}. 
Hence, the gradients can flow smoothly through very deep networks, making it possible to train such networks. Our experiments validate our theoretical findings.
\item \textbf{Enhancing Norm Preservation}: Using insights from  our theoretical investigation, we propose important modifications to the \emph{transition blocks} in the ResNet architecture. The transition blocks are used to change the number of channels and feature map size of the activations. 
Since these blocks do not use identity mapping as the skip connection, in general, they  do not preserve the norm of the gradient.  We propose to change the dimension of the activations in a norm preserving manner, such that the network becomes even more norm-preserving. For that, we propose a computationally efficient method to set the nonzero singular values of the convolution operator, without using singular value decomposition. We refer to the proposed architecture as Procrustes ResNet (ProcResNet). Our experiments demonstrate that the proposed norm-preserving blocks are able to improve the optimization stability and performance of ResNets. 
\end{itemize}

The rest of the paper is organized as follows. In Section \ref{sec:study}, the theoretical results and the bounds for norm-preservation of linear and nonlinear residual networks are presented. Then, in Section \ref{sec:Transition Block}, we show how to enhance the norm preservation of the residual networks by introducing a new computationally efficient regularization of convolutions. To verify our theoretical investigation and to demonstrate the effectiveness of the proposed regularization, we provide our experiments in Section \ref{sec:experiments}. Finally, Section \ref{sec:conclusions} draws conclusions.

\section{Norm-Preservation of Residual Networks}
\label{sec:study}

Our following main theorem states that, under certain conditions, a deep residual network representing a nonlinear mapping is norm-preserving in the backward path. We show that, at each residual block, the norm of the gradient with respect to the input is close to the norm of gradient with respect to the output. In other words, \emph{the residual block with identity mapping, as the skip connection, preserves the norm of the gradient in the backward path}. 
This results in several useful characteristics such as avoiding vanishing/exploding gradient, stable optimization, and performance gain. 

Suppose we want to represent a nonlinear mapping $\mathcal{F}: \mathbb{R}^N \rightarrow \mathbb{R}^N$ with a sequence of $L$ non-linear residual blocks of form:
\begin{equation}
\label{eq:nonlinear-residual-mapping}
\boldsymbol{x}_{l+1} = \boldsymbol{x}_{l} + F_l(\boldsymbol{x}_{l}).
\end{equation}
As illustrated in Figure \ref{subfig:resnet_orig_block}, $\boldsymbol{x}_{l}$ and $\boldsymbol{x}_{l+1}$ represent  respectively the input and output at $l^{\text{th}}$ layer. $F_l(\boldsymbol{x}_{l})$ is the residual transformation learned by the $l^{\text{th}}$ layer.
Before presenting the theorem, we lay out the following assumptions on $\mathcal{F}$.
\begin{assumption}
\label{assump:nonlinear_map}
The function $\mathcal{F}: \mathbb{R}^N \rightarrow \mathbb{R}^N$ is differentiable, invertible, and satisfies the following conditions:
\begin{enumerate}[label=(\roman*)]
    \item \label{assump:smooth} $\forall \boldsymbol{x},\boldsymbol{y},\boldsymbol{z} $ with bounded norm, $\exists \alpha > 0$, $\|  (\mathcal{F}'(\boldsymbol{x}) - \mathcal{F}'(\boldsymbol{y}))\boldsymbol{z} \| \leq \alpha \|\boldsymbol{x} - \boldsymbol{y}\| \|\boldsymbol{z}\|$,
    \item \label{assump:inv_lip} $\forall \boldsymbol{x},\boldsymbol{y} $ with bounded norm, $\exists \beta > 0$, $\| \mathcal{F}^{-1}(\boldsymbol{x}) - \mathcal{F}^{-1}(\boldsymbol{y}) \| \leq \beta \| \boldsymbol{x} - \boldsymbol{y} \|$, and
    \item \label{assump:det} $\exists \boldsymbol{x}$ with bounded norm such that $Det(\mathcal{F}'(\boldsymbol{x})) > 0$,
\end{enumerate}
\end{assumption}
$\alpha$ and $\beta$ are constants, independent of network size and architecture. 
Also, we assume that the domain of inputs is bounded. 
By rescaling inputs, we can assume, without loss of generality, that $\| \boldsymbol{x}_{1} \|_2 \leq 1$ for any input $\boldsymbol{x}_{1}$.

We would like to emphasize the point that these assumptions are on the mapping that we are trying to represent by the network, not the network itself. Thus, assumptions are independent of architecture. Assumptions \ref{assump:smooth} and \ref{assump:inv_lip} mean that the function $\mathcal{F}$ is smooth, Lipschitz continuous, and its inverse is differentiable. The practical relevance of invertibility assumption is justified by the success of reversible networks \cite{Dinh2017DensityNVP,Gomez2017TheActivations,Behrmann2019InvertibleNetworks}. Reversible architectures look for the true mapping $\mathcal{F}$ \emph{only} in the space of invertible functions and it is shown that imposing such strict assumption on the architecture does not hurt its representation ability \cite{Gomez2017TheActivations}. Thus, the mapping $\mathcal{F}$ is either invertible or can be well approximated by an invertible function, in many scenarios. However, unlike the reversible architectures, we do not assume residual blocks or the residual transformations, $F_l(.)$, are invertible, which makes the assumption less strict. Furthermore, our extensive experiments in Section \ref{sec:experiments} show that our theoretical analysis, which is based on these assumptions, hold. This is further empirical justification that these assumptions are relevant in practice. Finally, Assumption \ref{assump:det} is without loss of generality \cite{Bartlett2018RepresentingOptimization, Hardt2017IdentityLearning}. 
\begin{thm}
\label{thm:norm-preservation-nonlinear}
Suppose we want to represent a nonlinear mapping $\mathcal{F}: \mathbb{R}^N \rightarrow \mathbb{R}^N$, satisfying Assumption \ref{assump:nonlinear_map}, with a sequence of $L$ non-linear residual blocks of form $\boldsymbol{x}_{l+1} = \boldsymbol{x}_{l} + F_l(\boldsymbol{x}_{l})$. There exists a solution such that for all residual blocks we have: 
$$
(1 - \delta) \| \frac{\partial \mathcal{E}}{\partial \boldsymbol{x}_{l+1}} \|_2 \leq \| \frac{\partial \mathcal{E}}{\partial \boldsymbol{x}_{l}} \|_2 \leq (1 + \delta) \| \frac{\partial \mathcal{E}}{\partial \boldsymbol{x}_{l+1}} \|_2,
$$
where 
$\delta =  c\frac{\log(2L)}{L}$, $\mathcal{E}(.)$ is the cost function, and $c = c_1\max \{ \alpha \beta ( 1 + \beta), \beta(2 + \alpha) + \alpha  \}$ for some $c_1 > 0$. $\alpha$ and $\beta$ are constants defined in Assumption \ref{assump:nonlinear_map}.
\end{thm}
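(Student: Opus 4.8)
The plan is to reduce the two-sided gradient-norm inequality to a statement about the singular values of each block's Jacobian, and then to exhibit a concrete family of blocks achieving it by writing $\mathcal{F}$ as a composition of $L$ near-identity maps. First I would invoke the backpropagation chain rule: since $\boldsymbol{x}_{l+1} = \boldsymbol{x}_{l} + F_l(\boldsymbol{x}_{l})$, the block Jacobian is $\boldsymbol{J}_l = \boldsymbol{I} + F_l'(\boldsymbol{x}_{l})$ and $\frac{\partial \mathcal{E}}{\partial \boldsymbol{x}_{l}} = \boldsymbol{J}_l^{\top}\frac{\partial \mathcal{E}}{\partial \boldsymbol{x}_{l+1}}$. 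Taking Euclidean norms, $\sigma_{\min}(\boldsymbol{J}_l)\,\|\frac{\partial \mathcal{E}}{\partial \boldsymbol{x}_{l+1}}\|_2 \leq \|\frac{\partial \mathcal{E}}{\partial \boldsymbol{x}_{l}}\|_2 \leq \sigma_{\max}(\boldsymbol{J}_l)\,\|\frac{\partial \mathcal{E}}{\partial \boldsymbol{x}_{l+1}}\|_2$, so it suffices to construct the blocks so that every singular value of $\boldsymbol{J}_l$ lies in $[1-\delta,\,1+\delta]$. By Weyl's perturbation inequality this is in turn implied by the single operator-norm bound $\|F_l'(\boldsymbol{x}_{l})\|_2 \leq \delta$, which I only need to verify along the forward trajectory $\boldsymbol{x}_{l}$; the bounded-domain assumption and the bi-Lipschitz property keep this trajectory bounded, so the ``bounded norm'' hypotheses of Assumption \ref{assump:nonlinear_map} apply throughout.

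The existence part is where the real work lies. I would realize the target as $\mathcal{F} = \Psi_L \circ \cdots \circ \Psi_1$ with $\Psi_l = \mathrm{id} + F_l$, reducing the theorem to decomposing a smooth bi-Lipschitz map into near-identity factors in the $C^1$ sense, the problem studied in \cite{Bartlett2018RepresentingOptimization, Hardt2017IdentityLearning}. Concretely, I would build a homotopy $\Phi_t$ with $\Phi_0 = \mathrm{id}$ and $\Phi_1 = \mathcal{F}$, fix a partition $0 = t_1 < \cdots < t_{L+1} = 1$, and set $\Psi_l = \Phi_{t_{l+1}} \circ \Phi_{t_l}^{-1}$, which telescopes to give $\boldsymbol{x}_{l} = \Phi_{t_l}(\boldsymbol{x}_{1})$ and the desired composition. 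Assumption \ref{assump:det}, together with invertibility and the inverse-Lipschitz condition, guarantees that $\Phi_t$ remains a diffeomorphism along the path, so each $\Psi_l$ is well defined and orientation-preserving.

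To bound $\|\Psi_l' - \boldsymbol{I}\|_2$ I would expand $\Psi_l' = (\Phi_{t_{l+1}}' \circ \Phi_{t_l}^{-1})\,(\Phi_{t_l}^{-1})'$ to first order in the step $\Delta_l = t_{l+1}-t_l$. The leading term is $\Delta_l$ times the spatial Jacobian of the associated velocity field, which I would control using the smoothness bound (Assumption \ref{assump:smooth}, constant $\alpha$) for how fast $\mathcal{F}'$ varies and the inverse-Lipschitz bound (Assumption \ref{assump:inv_lip}, constant $\beta$) for how much $\Phi_t^{-1}$ can stretch; the second-order remainder is absorbed by the same two constants. This bookkeeping is what produces the reported $c = c_1\max\{\alpha\beta(1+\beta),\ \beta(2+\alpha)+\alpha\}$.

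The main obstacle, and the source of the logarithmic factor, is that a uniform partition $\Delta_l = 1/L$ does \emph{not} yield a uniform per-block bound: the spatial Jacobian of the velocity field can grow near an endpoint of the homotopy, so the factors there would deviate from the identity by more than $O(1/L)$. I would therefore refine the partition geometrically near the offending endpoint and balance the step sizes so that all $L$ factors deviate by the same amount; the resulting constraint is a harmonic-type sum $\sum_l 1/l \approx \log L$, which forces the common per-block deviation up to $\delta = c\,\frac{\log(2L)}{L}$. Making this balancing rigorous, namely verifying invertibility of $\Phi_t$ uniformly in $t$, extracting the precise $\log(2L)/L$ rate, and tracking the dependence on $\alpha$ and $\beta$, is the technical crux of the argument.
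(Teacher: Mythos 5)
Your opening reduction is exactly the paper's: by the chain rule the block Jacobian is $\boldsymbol{I} + DF_l(\boldsymbol{x}_l)$, and the two-sided inequality follows once $\sigma_{\max}(DF_l(\boldsymbol{x}_l)) \leq \delta$; the paper proves the needed singular-value perturbation fact as Lemma~\ref{lemma:sigma-bound} via a Neumann series, while you invoke Weyl's inequality, which establishes the same thing. The divergence is in the existence part, and here you should know that the paper does \emph{not} construct the decomposition at all: after bounding $\sigma_{\max}(DF_l(\boldsymbol{x}_l)) \leq \| F_l \|_L$, it invokes Theorem~1 of \cite{Bartlett2018RepresentingOptimization} as a black box, which states that under Assumption~\ref{assump:nonlinear_map} the map $\mathcal{F}$ factors into $L$ residual blocks with $\| F_l \|_L \leq c\frac{\log(2L)}{L}$. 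Everything after your first paragraph is therefore an attempt to reprove that cited result, and as written it has genuine gaps.

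Concretely: (a)~the homotopy $\Phi_t$ is never instantiated, and the claim that the assumptions ``guarantee that $\Phi_t$ remains a diffeomorphism along the path'' is false for generic choices. For the natural candidate $\Phi_t = (1-t)\,\mathrm{id} + t\,\mathcal{F}$, take $\mathcal{F}(\boldsymbol{x}) = -\boldsymbol{x}$ on $\mathbb{R}^2$: it satisfies every item of Assumption~\ref{assump:nonlinear_map} (it is a rotation by $\pi$, with $\det D\mathcal{F} = 1 > 0$), yet $\Phi_{1/2} \equiv \boldsymbol{0}$ is maximally singular. The construction in \cite{Bartlett2018RepresentingOptimization} is engineered around this: one first normalizes to $\mathcal{F}(0)=0$, $D\mathcal{F}(0)=\boldsymbol{I}$ and uses the zooming homotopy $\Phi_t(\boldsymbol{x}) = \mathcal{F}(t\boldsymbol{x})/t$, which inherits invertibility from $\mathcal{F}$ itself rather than from any path argument. (b)~That normalization leaves the linear map $D\mathcal{F}(0)$, which must separately be factored into near-identity matrices; this is where Assumption~\ref{assump:nonlinear_map}\ref{assump:det} actually earns its keep (a composition of near-identity maps necessarily has positive Jacobian determinant), and it is exactly the Hardt--Ma factorization that the paper quotes elsewhere as Lemma~\ref{lemma:norm-res-bound} \cite{Hardt2017IdentityLearning} --- including the rotation-by-$\pi$ example above. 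Your sketch assigns the determinant assumption a different job and omits the linear factorization entirely. (c)~The $\log(2L)$ factor is left at the level of a ``harmonic-type sum'' heuristic; extracting it requires tracking how the domains of the intermediate maps dilate under zooming, which makes the per-factor deviation scale with the relative rather than absolute partition step, and then balancing the first factor against the geometrically spaced remainder. None of this says your plan is unworkable --- it is, in outline, how \cite{Bartlett2018RepresentingOptimization} proceeds --- but these unproved steps \emph{are} the substance of the theorem. The economical repair is the paper's own move: cite that result to get $\| F_l \|_L \leq c\frac{\log(2L)}{L}$, then finish with your (correct) reduction.
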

\begin{proof}
See Section \ref{sec:app_nonlin_proof}.
\end{proof}

This theorem shows that the mapping $\mathcal{F}$ \emph{can} be represented by a sequence of $L$ non-linear residual blocks, such that the norm of the gradient does not change significantly, as it is backpropagated through the layers. \emph{One interesting implication of Theorem \ref{thm:norm-preservation-nonlinear} is that as $L$, the number of layers, increases, $\delta$ becomes smaller and the solution becomes more norm-preserving}. This is a very desirable feature because vanishing or exploding gradient often occurs in deeper network architectures. However, by utilizing residual blocks, as more blocks are stacked, the solution becomes extra norm-preserving.

\begin{figure*}
\centering    
\subfigure[ Block diagram of ResNet]{
\label{subfig:resnet_full}
\includestandalone[mode=buildnew,width=0.95\textwidth]{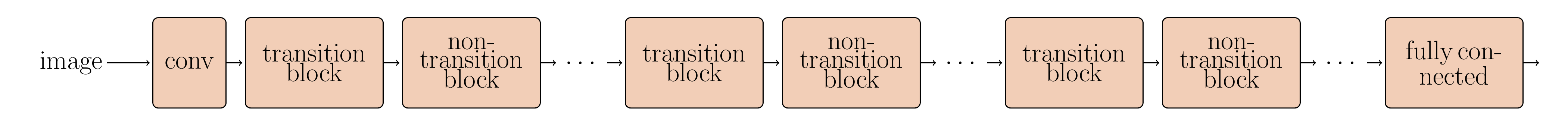}
}     
\subfigure[Residual block with identity mapping (non-transition block)]{
\label{subfig:resnet_orig_block}
\includestandalone[mode=buildnew,width=0.48\textwidth]{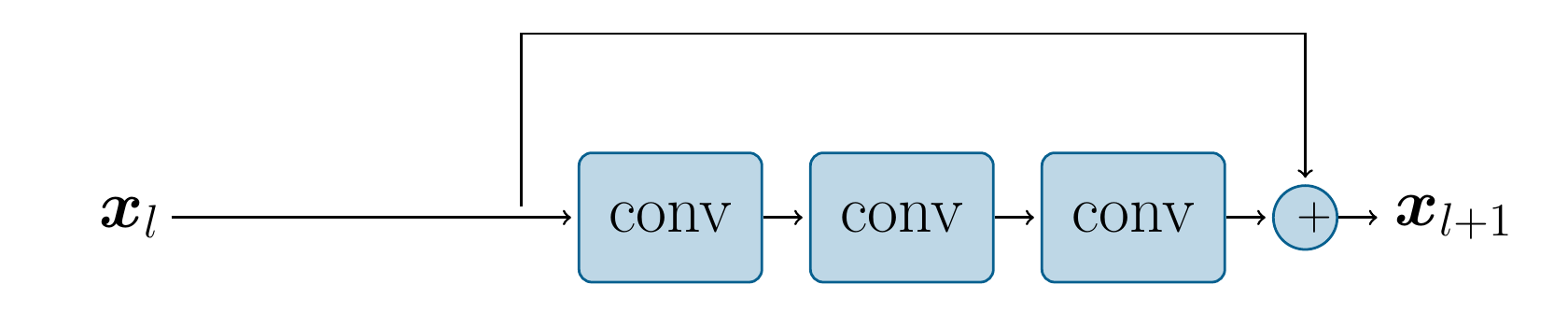}
}  
\subfigure[Origianl ResNet transition block]{
\label{subfig:trans_block}
\includestandalone[mode=buildnew,width=0.48\textwidth]{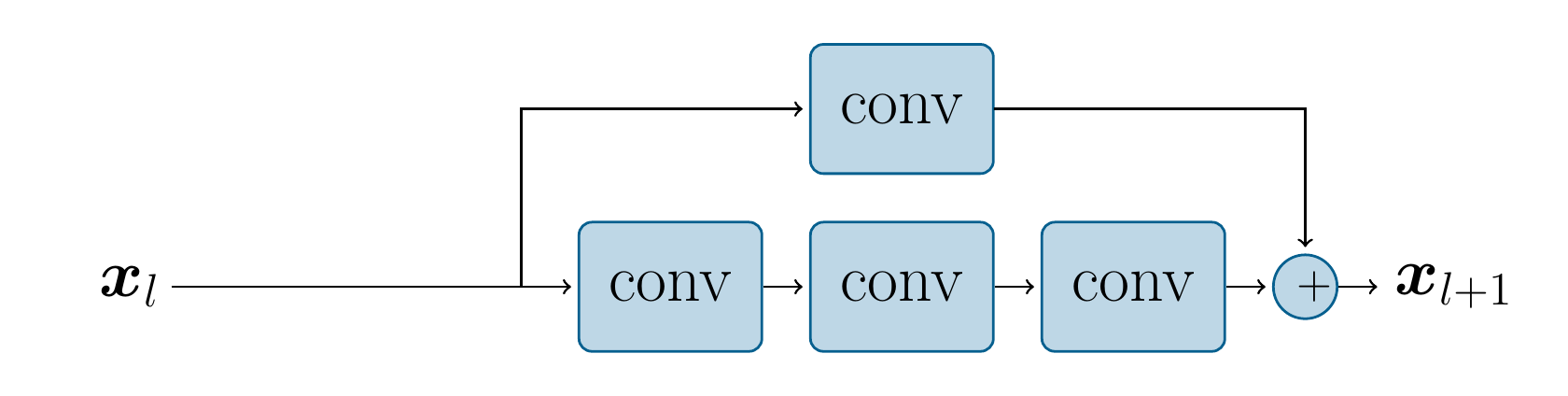}
}  
\subfigure[Proposed transition block]{
\label{subfig:new_trans_block}
\includestandalone[mode=buildnew,width=0.48\textwidth]{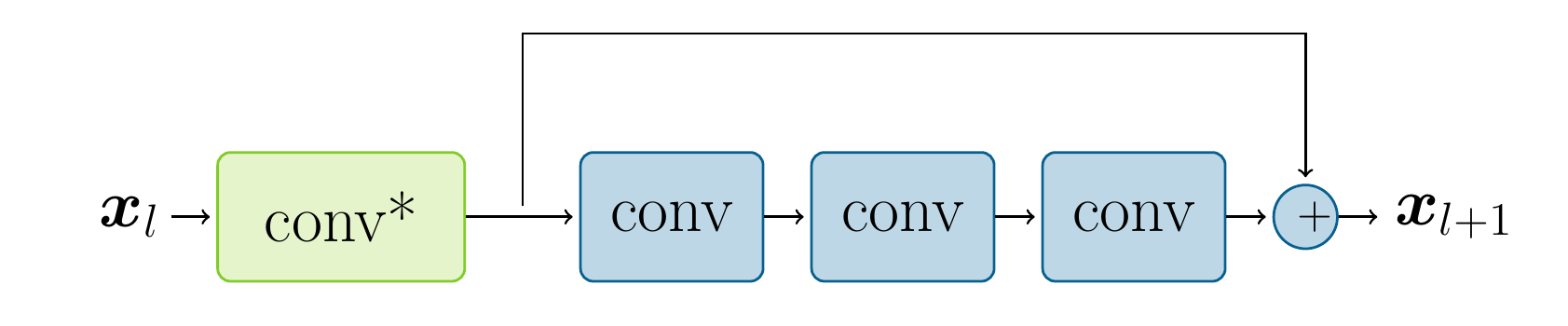}
}
\subfigure[Plain block (transition and non-transition block in a network without skip connections)]{
\label{subfig:plain_block}
\includestandalone[mode=buildnew,width=0.48\textwidth]{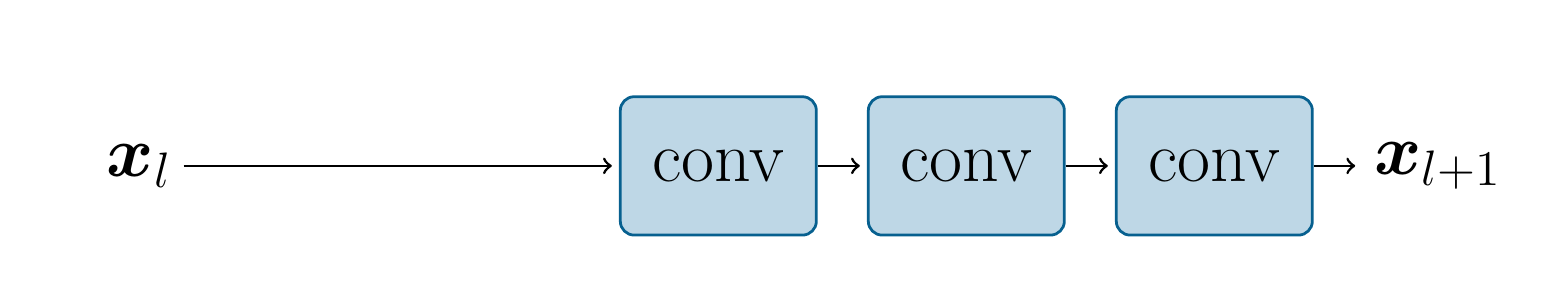}
}
\caption{\small ResNet architecture and its building blocks. Each conv block represents a sequence of batch normalization, ReLU, and convolution layers. conv* block represents the regularized convolution layer.}
\end{figure*}

Now that we proved such a solution exists, we show why residual networks can remain norm preserving throughout the training. For that, we consider the case where $F_l(\boldsymbol{x}_{l})$ consists of two layers of convolution and nonlinearity. The following corollary shows the bound on norm preservation of the residual block depends on the norm of the weights. Therefore, if we bound the optimizer to search only in the space of functions with small norms, we can ensure that the network will remain norm preserving throughout the training. 
Therefore, any critical point in this space is also norm-preserving. 
On the other hand, based on Theorem \ref{thm:norm-preservation-nonlinear}, we know that at least one norm preserving solution exists.
It is also known that, under certain conditions, any critical point achieved during optimization of ResNets is a global minimizer, meaning that it achieves the same loss function value as the global minimum\cite{Hardt2017IdentityLearning,Bartlett2018RepresentingOptimization, Kawaguchi2019DepthResNets}. Thus, this result implies that ResNets are able to maintain norm-preservation throughout the training and if they converge, the solution is a norm-preserving global minimizer.
The conclusions of the corollary can be easily generalized for residual block with more than two layers.
\begin{cor}
\label{cor:norm-preservation-weights}
Suppose a network contains non-linear residual blocks of form $\boldsymbol{x}_{l+1} = \boldsymbol{x}_{l} + \boldsymbol{W}_{l}^{(2)}\boldsymbol{\rho}(\boldsymbol{W}_{l}^{(1)}\boldsymbol{\rho}(\boldsymbol{x}_{l}))$, where $\boldsymbol{\rho}(.)$ is an element-wise non-linear operator with bounded derivative, i.e.,  $0 \leq \frac{\partial \rho_n(\boldsymbol{x}_{l})}{\partial x_{l,n}} \leq c_{\rho}, \forall n = 1,\dots,N$. Then, we have:
$$
(1 - \delta) \| \frac{\partial \mathcal{E}}{\partial \boldsymbol{x}_{l+1}} \|_2 \leq \| \frac{\partial \mathcal{E}}{\partial \boldsymbol{x}_{l}} \|_2 \leq (1 + \delta) \| \frac{\partial \mathcal{E}}{\partial \boldsymbol{x}_{l+1}} \|_2
$$
$$
\text{and} \quad \delta = c_{\rho}^2\| \boldsymbol{W}_{l}^{(1)} \|_2 \| \boldsymbol{W}_{l}^{(2)} \|_2.
$$
\end{cor}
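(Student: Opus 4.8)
The plan is to compute the backward-path Jacobian of a single residual block explicitly and then bound its spectral norm, after which the claimed two-sided inequality follows from elementary norm manipulations. Writing $F_l(\boldsymbol{x}_l) = \boldsymbol{W}_l^{(2)}\boldsymbol{\rho}(\boldsymbol{W}_l^{(1)}\boldsymbol{\rho}(\boldsymbol{x}_l))$, the chain rule applied to $\boldsymbol{x}_{l+1} = \boldsymbol{x}_l + F_l(\boldsymbol{x}_l)$ gives the backpropagation identity
\begin{equation*}
\frac{\partial \mathcal{E}}{\partial \boldsymbol{x}_l} = \left( \boldsymbol{I} + \frac{\partial F_l}{\partial \boldsymbol{x}_l} \right)^{\top} \frac{\partial \mathcal{E}}{\partial \boldsymbol{x}_{l+1}}.
\end{equation*}
Thus the whole problem reduces to controlling the operator norm of the residual Jacobian $\boldsymbol{J}_l := \partial F_l / \partial \boldsymbol{x}_l$; the identity skip connection supplies the $\boldsymbol{I} + \boldsymbol{J}_l$ structure that makes this bound directly useful.

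First I would expand $\boldsymbol{J}_l$ by differentiating through the two-layer composition. Introducing the diagonal matrices $\boldsymbol{D}_1 = \mathrm{diag}(\boldsymbol{\rho}'(\boldsymbol{x}_l))$ and $\boldsymbol{D}_2 = \mathrm{diag}(\boldsymbol{\rho}'(\boldsymbol{W}_l^{(1)}\boldsymbol{\rho}(\boldsymbol{x}_l)))$, which carry the derivatives of the element-wise nonlinearity at the two activation sites, the chain rule yields the factorization $\boldsymbol{J}_l = \boldsymbol{W}_l^{(2)} \boldsymbol{D}_2 \boldsymbol{W}_l^{(1)} \boldsymbol{D}_1$. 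I would then bound $\| \boldsymbol{J}_l \|_2$ by submultiplicativity of the spectral norm: since $\boldsymbol{\rho}$ has derivative confined to $[0, c_\rho]$, each $\boldsymbol{D}_i$ is diagonal with entries in that interval, so $\| \boldsymbol{D}_i \|_2 \leq c_\rho$, and combining the four factors gives $\| \boldsymbol{J}_l \|_2 \leq c_\rho^2 \| \boldsymbol{W}_l^{(1)} \|_2 \| \boldsymbol{W}_l^{(2)} \|_2 = \delta$.

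Finally, I would translate this into the two-sided bound. Setting $\boldsymbol{g} = \partial \mathcal{E} / \partial \boldsymbol{x}_{l+1}$, the identity above reads $\partial \mathcal{E} / \partial \boldsymbol{x}_l = \boldsymbol{g} + \boldsymbol{J}_l^{\top} \boldsymbol{g}$. The triangle inequality gives $\| \boldsymbol{g} + \boldsymbol{J}_l^{\top}\boldsymbol{g} \|_2 \leq (1 + \| \boldsymbol{J}_l \|_2) \| \boldsymbol{g} \|_2$, while the reverse triangle inequality gives $\| \boldsymbol{g} + \boldsymbol{J}_l^{\top}\boldsymbol{g} \|_2 \geq (1 - \| \boldsymbol{J}_l \|_2) \| \boldsymbol{g} \|_2$, where I use $\| \boldsymbol{J}_l^{\top} \|_2 = \| \boldsymbol{J}_l \|_2$. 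Substituting $\| \boldsymbol{J}_l \|_2 \leq \delta$ closes the argument (the lower bound being trivially valid, though vacuous, when $\delta > 1$).

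As for the main obstacle: there is no deep difficulty here, since the skip connection hands us the $\boldsymbol{I} + \boldsymbol{J}_l$ form for free and the rest is norm bookkeeping. The only step demanding care is the Jacobian factorization — correctly placing the two diagonal derivative matrices in the product and recognizing that the spectral norm of a diagonal matrix is the largest magnitude among its entries. The resulting bound is deliberately loose, as submultiplicativity is far from tight when the factors are misaligned; this is acceptable because the aim is merely to exhibit a $\delta$ that shrinks with the weight norms, thereby certifying that restricting the optimizer to small-norm weights keeps every block norm-preserving.
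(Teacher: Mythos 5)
Your proposal is correct and follows essentially the same route as the paper's proof: both write the backward map as $\boldsymbol{I} + \boldsymbol{J}_l$ with $\boldsymbol{J}_l$ the Jacobian of the residual branch, factor $\boldsymbol{J}_l$ into the two weight matrices interleaved with diagonal matrices of derivatives of $\boldsymbol{\rho}$, and bound its spectral norm by submultiplicativity to obtain exactly $\delta = c_{\rho}^2\|\boldsymbol{W}_l^{(1)}\|_2\|\boldsymbol{W}_l^{(2)}\|_2$. The only divergence is the final step: the paper invokes its Lemma 1, i.e. $1-\sigma_{\max}(\boldsymbol{M}) \leq \sigma_{\min}(\boldsymbol{I}+\boldsymbol{M}) \leq \sigma_{\max}(\boldsymbol{I}+\boldsymbol{M}) \leq 1+\sigma_{\max}(\boldsymbol{M})$, proved via a Neumann series and stated for non-singular $\boldsymbol{I}+\boldsymbol{M}$, whereas you apply the triangle and reverse-triangle inequalities directly to $\boldsymbol{g} + \boldsymbol{J}_l^{\top}\boldsymbol{g}$; your version is marginally more elementary and quietly dispenses with the non-singularity hypothesis that the paper's lemma formally carries. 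You are also more careful than the paper in distinguishing the two diagonal derivative matrices $\boldsymbol{D}_1$ and $\boldsymbol{D}_2$ (evaluated at different points), which the paper's appendix denotes by the single symbol $\boldsymbol{F}'$.
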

\begin{proof}
See Section \ref{app:proof-nonlinear}
\end{proof}
Here, $\|.\|_2$ is the induced matrix norm and is the largest singular value of the matrix, which is known to be upper bounded by the entry-wise $\ell_2$ norm. This means that norm-preservation is enforced throughout the training process, as long as the norm of the weights are small, not just at the beginning of the training by good initialization. This is the case in practice, since the weights of the network are regularized either explicitly using $\ell_2$ regularization, also known as weight decay, or implicitly by the optimization algorithm \cite{Gunasekar2018ImplicitNetworks,Hoffer2017TrainNetworks}. Thus, the gradients will have very similar magnitudes at different layers, and this leads to well-conditioning and faster convergence \cite{Glorot2010UnderstandingNetworks}.  

Although Theorem \ref{thm:norm-preservation-nonlinear} holds for linear blocks as well, we can derive tighter bounds for linear residual blocks by taking a slightly different approach. For that, we model each linear residual block as:
\begin{equation}
\label{eq:res-block-formulation}
\boldsymbol{x}_{l+1} = \boldsymbol{x}_{l} + \boldsymbol{W}_{l}\boldsymbol{x}_{l},
\end{equation}
where, $\boldsymbol{x}_{l}, \boldsymbol{x}_{l+1} \in \mathbb{R}^N$ are respectively the input and output of the $l^{th}$ residual block,  with dimension $N$. The weight matrix $\boldsymbol{W}_{l} \in \mathbb{R}^{N\times N}$ is the tunable linear transformation. 
The goal of learning is to compute a function $\boldsymbol{y} = \mathcal{M}(\boldsymbol{x},\mathcal{W})$, where $\boldsymbol{x} = \boldsymbol{x}_1$ is the input, $\boldsymbol{y} = \boldsymbol{x}_{L+1}$ is its corresponding output, and $\mathcal{W}$ is the collection of all adjustable linear transformations, i.e., $\boldsymbol{W}_{1},\boldsymbol{W}_{2}, \dots, \boldsymbol{W}_{L}$. In the case of simplified linear residual networks, function $\mathcal{M}(\boldsymbol{x},\mathcal{W})$ is a stack of $L$ residual blocks, as formulated in (\ref{eq:res-block-formulation}). Mathematically speaking, we have:
\begin{equation}
\label{eq:residual-mapping}
\boldsymbol{y} = \mathcal{M}(\boldsymbol{x},\mathcal{W}) = \prod_{l = 1}^{L} (\boldsymbol{I} + \boldsymbol{W}_{l})\boldsymbol{x},
\end{equation}
where $\boldsymbol{I}$ is an $N\times N$ identity matrix. $\mathcal{M}(\boldsymbol{x},\mathcal{W})$ is used to learn a linear mapping $\boldsymbol{R} \in \mathbb{R}^{N \times N}$ from its inputs and outputs. Furthermore, assume that $\boldsymbol{y}$ is contaminated with independent identically distributed (i.i.d) Gaussian noise, i.e.,  $\hat{\boldsymbol{y}} = \boldsymbol{R}\boldsymbol{x} + \boldsymbol{\epsilon}$, where $\boldsymbol{\epsilon}$ is a zero mean noise vector with covariance matrix $\boldsymbol{I}$. Hence, our objective is to minimize the expected error of the maximum likelihood estimator as:
\begin{equation}
\label{eq:optimization}
\min_{\mathcal{W}} \mathcal{E}(\mathcal{W}) = \mathbb{E}\{ \frac{1}{2} \| \hat{\boldsymbol{y}} - \mathcal{M}(\boldsymbol{x},\mathcal{W}) \|_2^2 \},
\end{equation}
where the expectation $\mathbb{E}$ is with respect to the population $(\boldsymbol{x},\boldsymbol{y})$. The following theorem states the  bound on the norm preservation of the linear residual blocks. \par

\begin{thm}
\label{thm:linear-norm-preservation}
For learning a linear map, $\boldsymbol{R} \in \mathbb{R}^{N \times N}$, between its input $\boldsymbol{x}$ and output $\boldsymbol{y}$ contaminated with i.i.d Gaussian noise, using a network consisting of $L$ linear residual blocks of form $\boldsymbol{x}_{l+1} = \boldsymbol{x}_{l} + \boldsymbol{W}_{l}\boldsymbol{x}_{l}$, there exists a global optimum for $\mathcal{E}(.)$, as defined in (\ref{eq:optimization}), such that for all residual blocks we have
$$
(1 - \delta) \| \frac{\partial \mathcal{E}}{\partial \boldsymbol{x}_{l+1}} \|_2 \leq \| \frac{\partial \mathcal{E}}{\partial \boldsymbol{x}_{l}} \|_2 \leq (1 + \delta) \| \frac{\partial \mathcal{E}}{\partial \boldsymbol{x}_{l+1}} \|_2
$$
for $L \geq 3\gamma$, where $\delta = \frac{c}{L}$, $c = 2(\sqrt[]{\pi} + \sqrt[]{3\gamma})^2$, and $\gamma = \max (|\log \sigma_{max}(\boldsymbol{R})|,|\log \sigma_{min}(\boldsymbol{R})|)$, where  $\sigma_{max}(\boldsymbol{R})$  and $\sigma_{min}(\boldsymbol{R})$, respectively, are maximum and minimum singular values of $\boldsymbol{R}$.
\end{thm}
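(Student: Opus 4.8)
The plan is to exhibit one explicit global optimum and to show that each of its blocks is nearly orthogonal. First I would characterize the global optima. Expanding the objective (\ref{eq:optimization}) and using that $\boldsymbol{\epsilon}$ is zero-mean and independent of $\boldsymbol{x}$ gives $\mathcal{E}(\mathcal{W}) = \frac{1}{2}\mathbb{E}\{\|(\boldsymbol{R}-\boldsymbol{P})\boldsymbol{x}\|_2^2\} + \text{const}$, where $\boldsymbol{P} = \prod_{l=1}^{L}(\boldsymbol{I}+\boldsymbol{W}_l)$. Provided the input covariance $\mathbb{E}\{\boldsymbol{x}\boldsymbol{x}^T\}$ is nonsingular, the minimum is attained exactly when $\boldsymbol{P}=\boldsymbol{R}$; hence \emph{any} factorization $\boldsymbol{R}=\prod_{l=1}^{L}(\boldsymbol{I}+\boldsymbol{W}_l)$ is a global optimum, and I am free to pick the most norm-preserving one.

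Next I would reduce norm-preservation to the singular values of a single factor. Back-propagating through block $l$ yields $\frac{\partial\mathcal{E}}{\partial\boldsymbol{x}_l}=(\boldsymbol{I}+\boldsymbol{W}_l)^{T}\frac{\partial\mathcal{E}}{\partial\boldsymbol{x}_{l+1}}$, so $\sigma_{min}(\boldsymbol{I}+\boldsymbol{W}_l)\,\|\frac{\partial\mathcal{E}}{\partial\boldsymbol{x}_{l+1}}\|_2 \le \|\frac{\partial\mathcal{E}}{\partial\boldsymbol{x}_l}\|_2 \le \sigma_{max}(\boldsymbol{I}+\boldsymbol{W}_l)\,\|\frac{\partial\mathcal{E}}{\partial\boldsymbol{x}_{l+1}}\|_2$. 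It therefore suffices to produce a factorization in which every factor has all singular values in $[1-\delta,1+\delta]$; by Weyl's inequality for singular values, $|\sigma_i(\boldsymbol{I}+\boldsymbol{W}_l)-1|\le\|\boldsymbol{W}_l\|_2$, so the requirement is simply $\|\boldsymbol{W}_l\|_2\le\delta$.

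For the construction I would take all blocks equal to a real $L$-th root, $\boldsymbol{I}+\boldsymbol{W}_l=\boldsymbol{R}^{1/L}=\exp(\frac{1}{L}\boldsymbol{A})$, where $\boldsymbol{A}$ is a real logarithm of $\boldsymbol{R}$; such a logarithm exists because $\boldsymbol{R}$ is invertible and may be taken to have positive determinant, without loss of generality as in Assumption \ref{assump:det}. Then $\|\boldsymbol{W}_l\|_2=\|\exp(\frac{1}{L}\boldsymbol{A})-\boldsymbol{I}\|_2\le\exp(\|\boldsymbol{A}\|_2/L)-1$, and once $L\ge 3\gamma$ keeps the exponent below a fixed threshold, the right side is at most a constant multiple of $\|\boldsymbol{A}\|_2/L$. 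This reduces the whole theorem to a single spectral-norm bound on the logarithm, of the form $\|\boldsymbol{A}\|_2\lesssim(\sqrt{\pi}+\sqrt{3\gamma})^2$, which when substituted back produces $\delta=c/L$ with the stated $c=2(\sqrt{\pi}+\sqrt{3\gamma})^2$.

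The hard part is exactly this bound on $\|\log\boldsymbol{R}\|_2$, because for a non-normal $\boldsymbol{R}$ the spectral norm of its logarithm is \emph{not} controlled by the eigenvalues alone. I would attack it through the polar decomposition $\boldsymbol{R}=\boldsymbol{Q}\boldsymbol{H}$: the orthogonal factor is $\boldsymbol{Q}=\exp(\boldsymbol{S})$ with $\boldsymbol{S}$ skew-symmetric and $\|\boldsymbol{S}\|_2\le\pi$ (its eigenvalues are the rotation angles $\pm i\theta_j$ with $|\theta_j|\le\pi$), while the symmetric positive-definite factor is $\boldsymbol{H}=\exp(\boldsymbol{M})$ with $\boldsymbol{M}$ symmetric and $\|\boldsymbol{M}\|_2=\max_i|\log\sigma_i(\boldsymbol{R})|\le\gamma$, since the eigenvalues of $\boldsymbol{H}$ are precisely the singular values of $\boldsymbol{R}$. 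Because $\boldsymbol{S}$ and $\boldsymbol{M}$ do not commute, $\log(\boldsymbol{Q}\boldsymbol{H})\ne\boldsymbol{S}+\boldsymbol{M}$, so I would split $\boldsymbol{A}=\log\boldsymbol{R}$ into its symmetric and skew parts and bound each, with the cross term from the interaction of the rotation and the stretch accounting for the remaining contribution. I expect this interaction to be the main obstacle: it is what inflates the naive sum $\pi+\gamma$ into $2(\sqrt{\pi}+\sqrt{3\gamma})^2$ and what forces the regime $L\ge 3\gamma$ in which the exponential estimate of the previous step remains valid.
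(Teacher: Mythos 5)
Your reduction steps are sound and coincide with the paper's: the backpropagation identity $\frac{\partial\mathcal{E}}{\partial\boldsymbol{x}_{l}}=(\boldsymbol{I}+\boldsymbol{W}_{l})^{T}\frac{\partial\mathcal{E}}{\partial\boldsymbol{x}_{l+1}}$ plus your Weyl-inequality step is exactly the content of the paper's Lemma 1 (proved there via a Neumann series), and the observation that any factorization $\boldsymbol{R}=\prod_{l}(\boldsymbol{I}+\boldsymbol{W}_{l})$ is a global optimum is the right starting point. The difference is that the paper then simply cites Theorem 2.1 of Hardt and Ma (its Lemma 3) for the existence of a factorization with $\|\boldsymbol{W}_{l}\|_2\le 2(\sqrt{\pi}+\sqrt{3\gamma})^2/L$, whereas you attempt to construct one yourself --- and your construction breaks at exactly the step you flagged as the hard part.

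The gap is that the bound your equal-block construction $\boldsymbol{I}+\boldsymbol{W}_{l}=\exp(\frac{1}{L}\log\boldsymbol{R})$ needs, namely $\|\log\boldsymbol{R}\|_2\lesssim(\sqrt{\pi}+\sqrt{3\gamma})^2$, is false for non-normal $\boldsymbol{R}$ --- not by a constant factor, but exponentially. Take the shear
$$
\boldsymbol{R}=\begin{pmatrix}1 & M\\ 0 & 1\end{pmatrix}
$$
with $M$ large: its singular values are approximately $M$ and $1/M$, so $\gamma\approx\log M$, yet $(\boldsymbol{R}-\boldsymbol{I})^2=\boldsymbol{0}$ gives $\log\boldsymbol{R}=\boldsymbol{R}-\boldsymbol{I}$ and hence $\|\log\boldsymbol{R}\|_2=M\approx e^{\gamma}$. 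So no bound polynomial in $\gamma$ exists; your route can only deliver $\delta\sim e^{\gamma}/L$, and in the regime $L\ge 3\gamma$ the exponent $\|\log\boldsymbol{R}\|_2/L$ is not even small, so the estimate $e^{x}-1=O(x)$ you rely on also fails. (A secondary defect: a real logarithm need not exist even when $\det\boldsymbol{R}>0$, e.g.\ $\boldsymbol{R}=\mathrm{diag}(-1,-2)$.) The repair --- and this is what the cited Hardt--Ma proof actually does --- is to never take a logarithm of $\boldsymbol{R}$ itself. Use the polar decomposition $\boldsymbol{R}=\boldsymbol{Q}\boldsymbol{H}$ as you suggest, but exploit that each factor separately is normal: $\boldsymbol{Q}=\exp(\boldsymbol{S})$ with $\boldsymbol{S}$ skew-symmetric and $\|\boldsymbol{S}\|_2\le\pi$, and $\boldsymbol{H}=\exp(\boldsymbol{M})$ with $\boldsymbol{M}$ symmetric and $\|\boldsymbol{M}\|_2\le\gamma$. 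Then do not make all blocks equal; instead allocate $L_1$ blocks to realize $\exp(\boldsymbol{S})=(\exp(\boldsymbol{S}/L_1))^{L_1}$ and the remaining $L_2=L-L_1$ blocks to realize $\exp(\boldsymbol{M})=(\exp(\boldsymbol{M}/L_2))^{L_2}$. The overall product is still $\boldsymbol{R}$, hence still a global optimum; every block is now the exponential of a normal matrix of norm $\pi/L_1$ or $\gamma/L_2$, so there is no non-commuting cross term to control at all, and balancing the split $L_1+L_2=L$ gives block norms of order $(\pi+\gamma)/L$, which the stated constant $c=2(\sqrt{\pi}+\sqrt{3\gamma})^2$ (arising from the bookkeeping and integer rounding in Hardt and Ma's argument) comfortably dominates.
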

\begin{proof}
See Section \ref{sec:app_lin_proof}
\end{proof}

Similar to the nonlinear residual blocks, the linear blocks become more norm-preserving as we increase the depth. However, the linear blocks become norm-preserving at a faster rate. The gradient norm ratio for the linear blocks approaches $1$ with a rate of $\mathcal{O}(\frac{1}{L})$, while this ratio for nonlinear blocks approaches $1$ with a rate of $\mathcal{O}(\frac{\log(L)}{L})$.

\section{Procrustes Residual Network  }
\label{sec:Transition Block}

As depicted in Figure \ref{subfig:resnet_full}, residual networks contain four different types of blocks: 
\begin{enumerate*}[label=(\roman*)]
\item convolution layer (first layer),
\item fully connected layer (last layer),
\item transition blocks (which change the dimension) as depicted in Figure \ref{subfig:trans_block}, and
\item residual blocks with identity skip connection, as illustrated in Figure \ref{subfig:resnet_orig_block}, which we also refer to as non-transition blocks.
\end{enumerate*}
Theoretical investigation presented in Section \ref{sec:study} holds only for residual blocks with identity mapping as the skip connection. Such identity skip connection cannot be used in the transition blocks, since the size of the input is not the same as the size of output. If the  benefits of residual networks can be explained, at least partly, by norm-preservation, then one can improve them by alternative methods for preserving the norm.  In this section, we propose to modify the transition blocks of ResNet architecture, to make them norm-preserving. Due to multiplicative effect through the layers, making these layers norm-preserving may be important, although they make up a small portion of the network. In the following, we discuss how to preserve the norm of the back-propagated gradients across all the blocks of the network.

As depicted in Figure \ref{subfig:trans_block}, in the original ResNet architecture, the dimension changing blocks, also known as transition blocks, use $1\times1$ convolution with stride of $2$ in their skip connections to match the dimension of input and output activations. Such transition blocks are not norm-preserving in general. \par 

Figure \ref{subfig:new_trans_block} shows the block diagram of the proposed norm-preserving transition block. To change the dimension in a norm-preserving manner, we utilize a norm preserving convolution layer, conv*. For that, we project the convolution kernel onto the set of norm preserving kernels by setting its singular values. Here, we show how we can make the convolution layer norm preserving by regularizing the singular values, without using singular value decomposition. Specifically, the gradient of a convolution layer with kernel of  size  $k$, with $c$ input channels, and $d$ output channels can be formulated as:
\begin{equation}
\label{eq:gradient-conv}
\Delta_{\boldsymbol{x}} = \hat{\boldsymbol{W}} \Delta_{\boldsymbol{y}},
\end{equation}
where $\Delta_{\boldsymbol{x}}$ and $\Delta_{\boldsymbol{y}}$ respectively are the gradients with respect to the input and output of the convolution. $\Delta_{\boldsymbol{y}}$ is an $n^2d$ dimensional vector, representing $n \times n$ pixels in $d$ output channels, and $\Delta_{\boldsymbol{x}}$ is an $n^2c$ dimensional vector, representing the gradient at the input. Furthermore, $\hat{\boldsymbol{W}}$ is an $n^2c \times n^2d $ dimensional matrix embedding the back-propagation operation for the convolution layer. We can represent this linear transformation as:
\begin{equation}
\label{eq:gradient-conv-svd}
\Delta_{\boldsymbol{x}} = \sum_{j = 1}^{n^2c} \sigma_j\boldsymbol{u}_j<\Delta_{\boldsymbol{y}}, \boldsymbol{v}_j>,
\end{equation}
where $\{ \sigma_j, \boldsymbol{u}_j, \boldsymbol{v}_j\}$ is the set of singular values and singular vectors of $\hat{\boldsymbol{W}}$. 
Furthermore, since the set of the right singular vectors, i.e., $\{ \boldsymbol{v}_j \}$, is an orthonormal basis set for $\Delta_{\boldsymbol{y}}$, we can write the gradient at the output as:
$$
\Delta_{\boldsymbol{y}} = \sum_{j = 1}^{n^2d} \boldsymbol{v}_j<\Delta_{\boldsymbol{y}}, \boldsymbol{v}_j>.
$$

Thus, we can compute the expected value of the norm of the gradients as:
$$
\begin{aligned}
& \mathbb{E} [\| \Delta_{\boldsymbol{x}} \|_2^2]
&& = \sum_{j = 1}^{n^2c} \sigma_j^2 \mathbb{E} [ |<\Delta_{\boldsymbol{y}} , \boldsymbol{v}_j>|^2], \\
& \mathbb{E} [\| \Delta_{\boldsymbol{y}} \|_2^2]
&& = \sum_{j = 1}^{n^2d} \mathbb{E} [ |<\Delta_{\boldsymbol{y}} , \boldsymbol{v}_j>|^2],
\end{aligned}
$$
where we use the fact that $\boldsymbol{u}_i^T\boldsymbol{u}_j = \boldsymbol{v}_i^T\boldsymbol{v}_j = 0$ for $i \neq j$ and $\boldsymbol{u}_j^T\boldsymbol{u}_j = \boldsymbol{v}_j^T\boldsymbol{v}_j = 1$ and the expectation is over the data population. 
We propose to preserve the norm of the gradient, i.e., $\mathbb{E} [\| \Delta_{\boldsymbol{x}} \|_2^2] = \mathbb{E} [\| \Delta_{\boldsymbol{y}} \|_2^2]$,  by setting all the non-zero singular values to $\sigma$. It is easy to show that we can achieve this by setting
\begin{equation}
\label{eq:norm-preserve-cond}
\sigma^2 = \frac{\sum_{j = 1}^{n^2d} \mathbb{E} [ |<\Delta_{\boldsymbol{y}} , \boldsymbol{v}_j>|^2]}
{\sum_{j, \sigma_j \neq 0}^{} \mathbb{E} [ |<\Delta_{\boldsymbol{y}} , \boldsymbol{v}_j>|^2]},
\end{equation}
where the summation in the denominator is over the singular vectors $\boldsymbol{v}_j$ corresponding to the nonzero singular values, i.e., $\sigma_j \neq 0$. The ratio in \eqref{eq:norm-preserve-cond} is the ratio of expected energy of $\Delta_{\boldsymbol{y}}$, i.e. $\mathbb{E} [\| \Delta_{\boldsymbol{y}} \|_2^2]$, divided by the portion of energy that does not lie in the null space of $\hat{\boldsymbol{W}}$.
We make the assumption that this ratio can be approximated by $\frac{n^2d}{n^2\min(d,c)}$. 
This assumption implies that about $\frac{n^2\min(d,c)}{n^2d}$ of the total energy of $\Delta_{\boldsymbol{y}}$ will lie in the $n^2\min(d,c)$-dimensional subspace, corresponding to orthonormal basis set $\{ \boldsymbol{v}_j  | \sigma_j \neq 0\}$, of our $n^2d$-dimensional space. It is easy to notice that the assumption holds if the energy of $\Delta_{\boldsymbol{y}}$ is distributed uniformly among the directions in the basis set $\{ \boldsymbol{v}_j \}$. But, since we are taking the sum over a large number of bases, it can also hold with high probability in cases where there is some variation in the distribution of energies along different directions.
This is not a strict assumption in high dimensional spaces and we will investigate the practical relevance of this assumption in a real-world setting shortly.
Thus, we can achieve norm preservation by setting all the nonzero singular values to $\sqrt{\frac{d}{\min(d,c)}}$. We can enforce this equality without using singular value decomposition.  For that, we use the following theorem from \cite{Sedghi2019TheLayers}. This theorem states that the singular values of the convolution operator can be calculated by finding the singular values of the Fourier transform of the slices of the convolution kernels.

\begin{thm}
\label{thm:conv-singular-values}
(Theorem 6 from \cite{Sedghi2019TheLayers})
For any convolution kernel $\boldsymbol{K} \in \mathbb{R}^{k \times k \times d \times c}$ acting on an $n \times n \times d$ input, let $\hat{\boldsymbol{W}}$ be the matrix encoding the linear transformation
computed by a convolutional layer parameterized by $\boldsymbol{K}$. Also, for each $u,v \in [n]\times[n]$, let $\boldsymbol{P}^{(u,v)} \in \mathbb{C}^{d \times c}$ be the matrix given by $\boldsymbol{P}^{(u,v)}_{i,j} = (\mathcal{F}_n(\boldsymbol{K}_{:,:,i,j}))_{u,v}$, where $\mathcal{F}_n(.)$ is the operator describing an $n \times n$ 2D Fourier transform. Then, the set of singular values of $\hat{\boldsymbol{W}}$ is the union (allowing repetitions) of all the singular values of $\boldsymbol{P}^{(u,v)}$ matrices $\forall u,v$.
\end{thm}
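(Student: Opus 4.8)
The plan is to reduce the singular value computation of $\hat{\boldsymbol{W}}$ to a frequency-by-frequency problem, exploiting the classical fact that a circular convolution is diagonalized by the discrete Fourier transform while the multichannel structure contributes only a small channel-mixing block at each frequency. Since the DFT and coordinate permutations are unitary (up to the standard $1/n$ normalization), they leave singular values unchanged, so the spectrum of $\hat{\boldsymbol{W}}$ collapses to the union of the spectra of the per-frequency blocks $\boldsymbol{P}^{(u,v)}$.

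First I would treat a single channel pair. Fixing a row index $i$ and column index $j$, the linear map that implements the $n \times n$ circular convolution with the slice $\boldsymbol{K}_{:,:,i,j}$ is represented by a block-circulant-with-circulant-blocks (BCCB) matrix $\hat{\boldsymbol{W}}_{i,j} \in \mathbb{C}^{n^2 \times n^2}$. The key ingredient is that every BCCB matrix is diagonalized by the matrix form $\boldsymbol{F} = \boldsymbol{F}_n \otimes \boldsymbol{F}_n$ of the 2D DFT operator $\mathcal{F}_n$, so that $\hat{\boldsymbol{W}}_{i,j} = \boldsymbol{F}^{*}\boldsymbol{D}_{i,j}\boldsymbol{F}$, where the diagonal entries of $\boldsymbol{D}_{i,j}$, indexed by frequency $(u,v)$, are exactly $(\mathcal{F}_n(\boldsymbol{K}_{:,:,i,j}))_{u,v} = \boldsymbol{P}^{(u,v)}_{i,j}$. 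This already identifies the per-frequency eigenvalues of each scalar block with the entries of the matrices $\boldsymbol{P}^{(u,v)}$.

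Next I would assemble the whole operator and diagonalize all blocks at once. Viewing $\hat{\boldsymbol{W}}$ as a $d \times c$ array of BCCB blocks $\hat{\boldsymbol{W}}_{i,j}$, I would apply the block-unitary transformations $\boldsymbol{I}_d \otimes \boldsymbol{F}$ on one side and $(\boldsymbol{I}_c \otimes \boldsymbol{F})^{*}$ on the other. By the per-block diagonalization, this conjugation replaces each block $\hat{\boldsymbol{W}}_{i,j}$ by the diagonal matrix $\boldsymbol{D}_{i,j}$, turning $\hat{\boldsymbol{W}}$ into a $d \times c$ array of diagonal matrices. The decisive step is then a coordinate permutation that regroups the $(\mathrm{channel},\mathrm{frequency})$ indices into $(\mathrm{frequency},\mathrm{channel})$ indices on both sides; under this relabeling the array of diagonals becomes a genuinely block-diagonal matrix with $n^2$ blocks, and the block sitting at frequency $(u,v)$ gathers the $(u,v)$-th diagonal entry of $\boldsymbol{D}_{i,j}$ over all $i,j$, which is precisely the $d \times c$ matrix $\boldsymbol{P}^{(u,v)}$.

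Finally I would invoke unitary invariance: $\boldsymbol{F}$ is unitary after normalization and permutation matrices are orthogonal, so none of the above transformations alter the singular values of $\hat{\boldsymbol{W}}$. Since the singular values of a block-diagonal matrix are the union, counted with multiplicity, of the singular values of its diagonal blocks, the claimed characterization follows. The step I expect to be the main obstacle is the bookkeeping in the permutation: one must specify the index reordering carefully so that the diagonal entries scattered across the channel blocks coalesce into the frequency blocks $\boldsymbol{P}^{(u,v)}$, and verify that the \emph{same} reordering is applied consistently on both the $d$-block and $c$-block sides, so that the result is exactly block diagonal rather than merely block triangular.
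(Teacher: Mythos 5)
The paper does not prove this theorem itself; it simply cites Theorem 6 of Sedghi et al.\ \cite{Sedghi2019TheLayers}, and your argument is precisely the one given in that reference: diagonalize each per-channel-pair doubly block circulant block via the 2D DFT, permute indices so the resulting array of diagonal blocks becomes block diagonal with $(u,v)$ block equal to $\boldsymbol{P}^{(u,v)}$, and invoke unitary invariance of singular values together with the fact that the spectrum of a block-diagonal matrix is the union of its blocks' spectra. Your sketch is correct, and you rightly flag the only delicate point, namely applying the same frequency-channel index permutation consistently on the input and output sides.
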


\begin{proof}
See \cite{Sedghi2019TheLayers}.
\end{proof}

Hence, to satisfy the condition (\ref{eq:norm-preserve-cond}), we can set all the nonzero singular values of $\boldsymbol{P}^{(u,v)}$ to $\sqrt{\frac{d}{\min(d,c)}}$ for all $u$ and $v$. This can be done by finding the matrix $\hat{\boldsymbol{P}}^{(u,v)}$ that minimizes $\| \boldsymbol{P}^{(u,v)} - \hat{\boldsymbol{P}}^{(u,v)} \|_F^2$, such that $\hat{\boldsymbol{P}}^{{(u,v)}^T}\hat{\boldsymbol{P}}^{(u,v)} = \frac{d}{\min(d,c)}\boldsymbol{I}$, where $\|.\|_F$ denotes the Frobenius norm and $\boldsymbol{I}$ is a $c \times c$ identity matrix. It can be shown that the solution to this problem is given by
\begin{equation}
\label{eq:procrustes-solution}
\hat{\boldsymbol{P}}^{(u,v)} = \sqrt{\frac{d}{\min(d,c)}} \boldsymbol{P}^{(u,v)}(\boldsymbol{P}^{{(u,v)}^T}\boldsymbol{P}^{(u,v)})^{-\frac{1}{2}}.
\end{equation}

This is closely related to \emph{Procrustes} problems, in which the goal is to find the closest orthogonal matrix to a given matrix \cite{Gower2004ProcrustesProblems}. Finding the inverse of the square root of product $\boldsymbol{P}^{{(u,v)}^T}\boldsymbol{P}^{(u,v)}$ can be computationally expensive, specifically for large number of channels $c$. Thus, we exploit an iterative algorithm that computes the inverse of the square root using only matrix multiplications. Specifically, one can use the following iterations to compute $(\boldsymbol{P}^{{(u,v)}^T}\boldsymbol{P}^{(u,v)})^{-\frac{1}{2}}$ \cite{Higham1997StableRoot}:
\begin{equation}
\label{eq:inv-sqrt-iters}
\begin{aligned}
&\boldsymbol{T}_{k} = 3 \boldsymbol{I} - \boldsymbol{Z}_{k}\boldsymbol{Y}_{k},\\
&\boldsymbol{Y}_{k+1} = \frac{1}{2} \boldsymbol{Y}_{k}\boldsymbol{T}_{k}, \\
&\boldsymbol{Z}_{k+1} = \frac{1}{2} \boldsymbol{T}_{k}\boldsymbol{Z}_{k},
\end{aligned}
\end{equation}
for $k = 0,1,\dots$ and the iterators are initialized as:
$$
\boldsymbol{Y}_0 = \boldsymbol{P}^{{(u,v)}^T}\boldsymbol{P}^{(u,v)},
\boldsymbol{Z}_0 = \boldsymbol{I}.
$$
It has been shown that $\boldsymbol{Z}_{k}$ converges to $(\boldsymbol{P}^{{(u,v)}^T}\boldsymbol{P}^{(u,v)})^{-\frac{1}{2}}$ quadratically \cite{Higham1997StableRoot}. Since the iterations only involve matrix multiplication, they can be implemented efficiently on GPUs.

\begin{algorithm}[t]
\renewcommand{\algorithmicrequire}{\textbf{Input:}}
\renewcommand{\algorithmicensure}{\textbf{Output:}}
\begin{algorithmic}[1]
\REQUIRE Convolution kernel $\boldsymbol{K}$ at the current iteration
\STATE Perform the gradient descent step on the kernel $\boldsymbol{K}$. \\
\STATE Calculate $\boldsymbol{P}^{(u,v)}$ for each $u,v \in [n]\times[n]$ as $\boldsymbol{P}^{(u,v)}_{i,j} = (\mathcal{F}_n(\boldsymbol{K}_{:,:,i,j}))_{u,v}$. \\
\STATE Compute $(\boldsymbol{P}^{{(u,v)}^T}\boldsymbol{P}^{(u,v)})^{-\frac{1}{2}}$ using \eqref{eq:inv-sqrt-iters}. \\
\STATE Calculate $\hat{\boldsymbol{P}}^{(u,v)}$ using \eqref{eq:procrustes-solution}.\\
\STATE Update $\boldsymbol{K}$ using the inverse 2D Fourier transform of $\hat{\boldsymbol{P}}^{(u,v)}$. \\
\end{algorithmic}
\caption{\small Update rules for transition kernels at each iteration}
\label{alg:iteration}
\end{algorithm}

Thus, to keep the convolution kernels norm preserving throughout the training, at each iteration, we compute the matrices $\boldsymbol{P}^{(u,v)}$ and set the nonzero singular values using (\ref{eq:procrustes-solution}). Algorithm \ref{alg:iteration} summarizes the operations performed at each iteration on the kernels of the regularized convolution layers. To keep the desired norm-preservation property after performing the gradient descent step, such as SGD, Adam, etc, the proposed scheme is used to re-enforce norm-preservation on the updated kernel. In this manner, we can maintain norm-preservation, while updating the kernel during the training.
Our experiments in Section \ref{sec:experiments} show that performing the proposed projection on the transition block of deep ResNets increases the training time by less than 8\%. Also, since the number of transition blocks are independent of depth, the deeper the network gets, the computational overhead of the proposed modification becomes less significant.
Figure \ref{subfig:new_trans_block} shows the diagram of the proposed transition block, where a regularized convolution layer, conv*, is used to change the dimension. Hence, we are able to exploit a regular residual block with identity mapping, which is norm preserving. \par 

Similar to \cite{He2015DelvingClassification}, to take into the account the effect of a ReLU nonlinearity and to make a Conv-Relu layer norm-presering, we just need to add a factor of $\sqrt{2}$ to the singular values and set them to $\sqrt{\frac{2d}{\min(d,c)}}$. Intuitively, the element-wise ReLU sets half of the units to zero on average, making the expected value of the energy of the gradient equal to 
$\mathbb{E} [\| \Delta_{\boldsymbol{x}} \|_2^2] 
= \frac{1}{2}\sum_{j = 1}^{n^2c} \sigma_j^2 \mathbb{E} [ |<\Delta_{\boldsymbol{y}} , \boldsymbol{v}_j>|^2]$. Therefore, to compensate this, we need to satisfy this condition:
$$
\frac{1}{2} \sum_{j = 1}^{n^2c} \sigma_j^2 \mathbb{E} [ |<\Delta_{\boldsymbol{y}} , \boldsymbol{v}_j>|^2] = \sum_{j = 1}^{n^2d} \mathbb{E} [ |<\Delta_{\boldsymbol{y}} , \boldsymbol{v}_j>|^2]
$$

It is also worthwhile to mention that since we are trying to preserve the norm of the backward signal, the variable $n$ in Theorem \ref{thm:conv-singular-values} represents the size of feature map size at the output of the convolution. 

To evaluate the effectiveness of the proposed projection, we design the following experiment. We perform the projection on the convolution layers of a small $3$-layer network. The network consists of $3$ convolutional layers, followed by ReLU non-linearity. To examine the gradient norm ratio for different number of input and output channels, the second layer is a $3 \times 3$ convolution with $c$ input channels and $d$ output channels. The first and third layers are $1 \times 1$ convolutions to change the number of channels and to match the size of the input and output layers. 
Figure \ref{fig:ratio_channels} shows the gradient norm ratio, i.e., $\small \| \frac{\partial \mathcal{E}}{\partial \boldsymbol{x}_{l+1}} \|_2$ to $\| \frac{\partial \mathcal{E}}{\partial \boldsymbol{x}_{l}} \|_2$, for different values of $c$ and $d$ at $10^{\text{th}}$ training epoch on CIFAR-10, with and without the proposed projection. The values are averaged over $10$ different runs. 

It is evident that the proposed projection enhances the norm preservation of the Conv-ReLU layer, as it moves the gradient norm ratios toward $1$. The only failure case is for networks with very small $c$ and $c \ll d$. This is because, due to the smaller size of the space, our assumption that the energy of the signal in the $n^2c$ dimensional subspace, corresponding to the $n^2c$ non-zero singular values, is approximately $\frac{n^2c}{n^2d}$ of the total energy of the signal, is violated with higher probability. However, in more practical settings, where the number of channels is large and the assumption is held, the proposed projection performs as expected. This experiment illustrates the validity of our analysis as well as the effectiveness of the proposed projection for such practical scenarios. In the next section, we demonstrate the advantages of the proposed method for image classification task. \par 

\begin{figure}
\centering
\subfigure[ \small With the Proposed Regularization ]{
\label{subfig:ratio_channels_with_regul}
\includegraphics[width=\figurewidth]{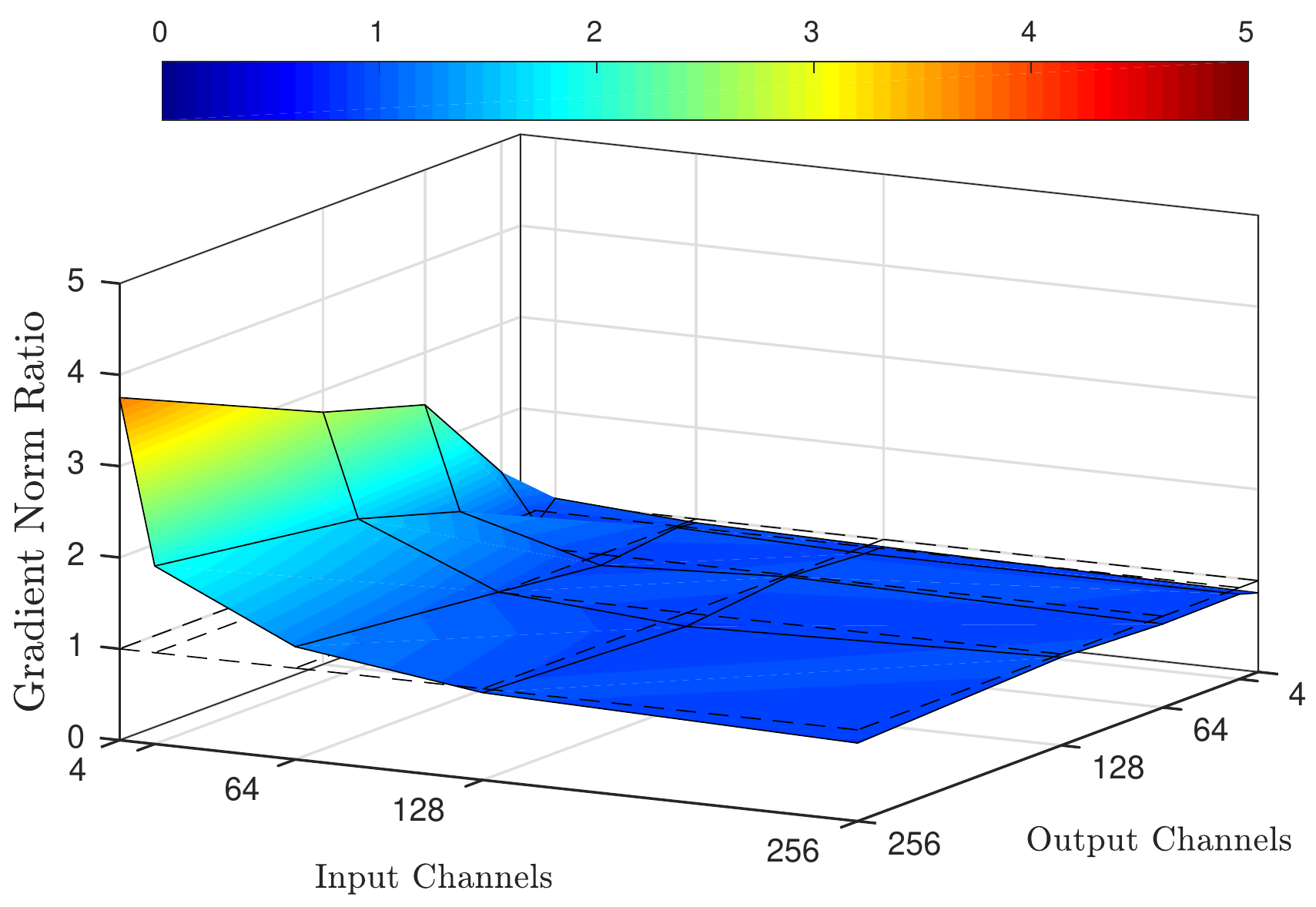}
}
\subfigure[ \small Without Regularization ]{
\label{subfig:ratio_channels_without_regul}
\includegraphics[width=\figurewidth]{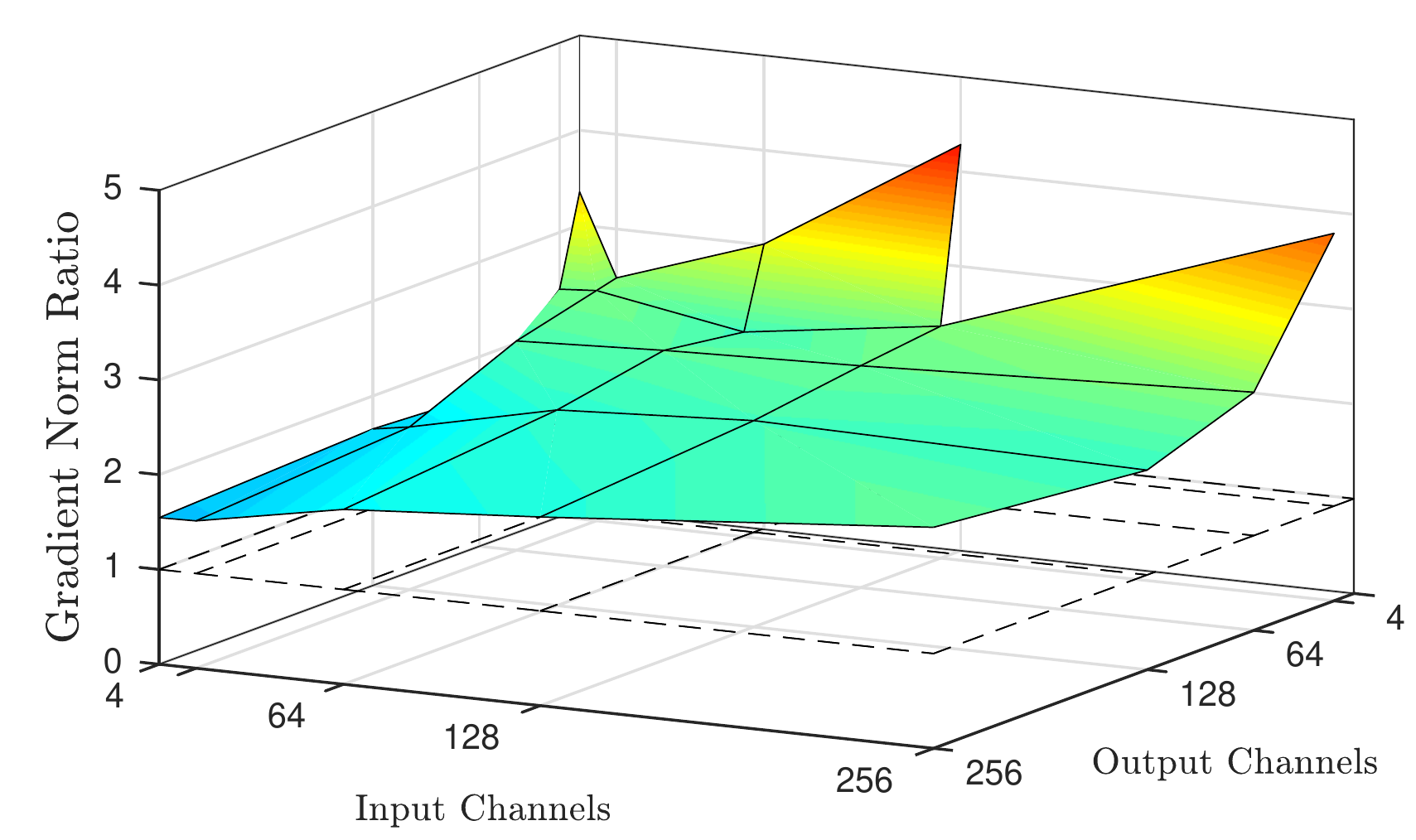}
}
\caption{\small The ratio of gradient norm at output to gradient norm at input, i.e., $\small \| \frac{\partial \mathcal{E}}{\partial \boldsymbol{x}_{l+1}} \|_2$ to $\| \frac{\partial \mathcal{E}}{\partial \boldsymbol{x}_{l}} \|_2$,  of a convolution layer for different number of input and output channels at $10^{\text{th}}$ training epoch \subref{subfig:ratio_channels_with_regul} with, and \subref{subfig:ratio_channels_without_regul} without the proposed regularization on the singular values of the convolution. }
\label{fig:ratio_channels}
\end{figure}
\section{Experiments}
\label{sec:experiments}
To validate our theoretical investigation, presented in Section \ref{sec:study}, and to empirically demonstrate the behavior and effectiveness of the proposed modifications, we experimented with Residual Network (ResNet) and the proposed Procrustes Residual Network (ProcResNet) architectures on CIFAR10 and CIFAR100 datasets. Training and testing datasets contain 50,000 and 10,000 images of visual classes, respectively \cite{Krizhevsky2009LearningImages.2009}. Standard data augmentation (flipping and shifting), same as \cite{He2016IdentityNetworks,He2016DeepRecognition,Huang2017DenselyNetworks}, is adopted. Furthermore, channel means and standard deviations are used to normalize the images. The network is trained using stochastic gradient descent. The weights are initialized using the method proposed in \cite{He2015DelvingClassification} and the initial learning rate is $0.1$. Batch size of $128$ is used for all the networks. The weight decay is $10^{-4}$ and momentum is $0.9$. The results are based on the top-1 classification accuracy. \par 

Experiments are performed on three different network architectures: 
\begin{enumerate*}
\item \textbf{ResNet} contains one convolution layer, $L$ residual blocks, three of which are transition blocks, and one fully connected layer. Each residual block consists of three convolution layers, as depicted in Figure \ref{subfig:resnet_orig_block} and Figure \ref{subfig:trans_block}, resulting in a network of depth $3L + 2$. This is the same architecture as in \cite{He2016IdentityNetworks}.
\item \textbf{ProcResNet} has the same architecture as ResNet, except the transition layers are modified, as explained in Section \ref{sec:Transition Block}. In this design, $3$ extra convolution layers are added to the network. However, we can use the first convolution layer of the original ResNet design to match the dimensions and only add two extra layers. This leads to a network of depth $3L+4$. 
\item \textbf{Plain} network is also same as ResNet without the skip connection in all the $L$ residual blocks, as shown in Figure \ref{subfig:plain_block}.
\end{enumerate*}

Furthermore, to decrease the computational burden of the proposed regularization, we perform the projection, as described in Section \ref{sec:Transition Block}, every $2$ iterations. This reduces the computation time significantly without hurting the performance much. In this setting, performing the proposed regularization increases the training time for ResNet164 about $7.6\%$. However, since we perform the regularization only on three blocks, regardless of the depth, as the network becomes deeper the computational overhead becomes less significant. For example, implementing the same projections on ResNet1001 increases the training time by only $3.5\%$. This is significantly less computation compared to regularization using SVD, which leads to $53\%$ and $23\%$ training time overhead for ResNet164 and ResNet1001, respectively\footnote{An implementation of ProcResNet is provided here: \url{https://github.com/zaeemzadeh/ProcResNet}}.

\subsection{Norm-Preservation}
\label{subsec:exp_ratio}
\begin{figure*}
\centering     
\subfigure[ Plain20 ]{
\label{subfig:ratio-epoch-plain20}
\includegraphics[width=\tilefigwidth]{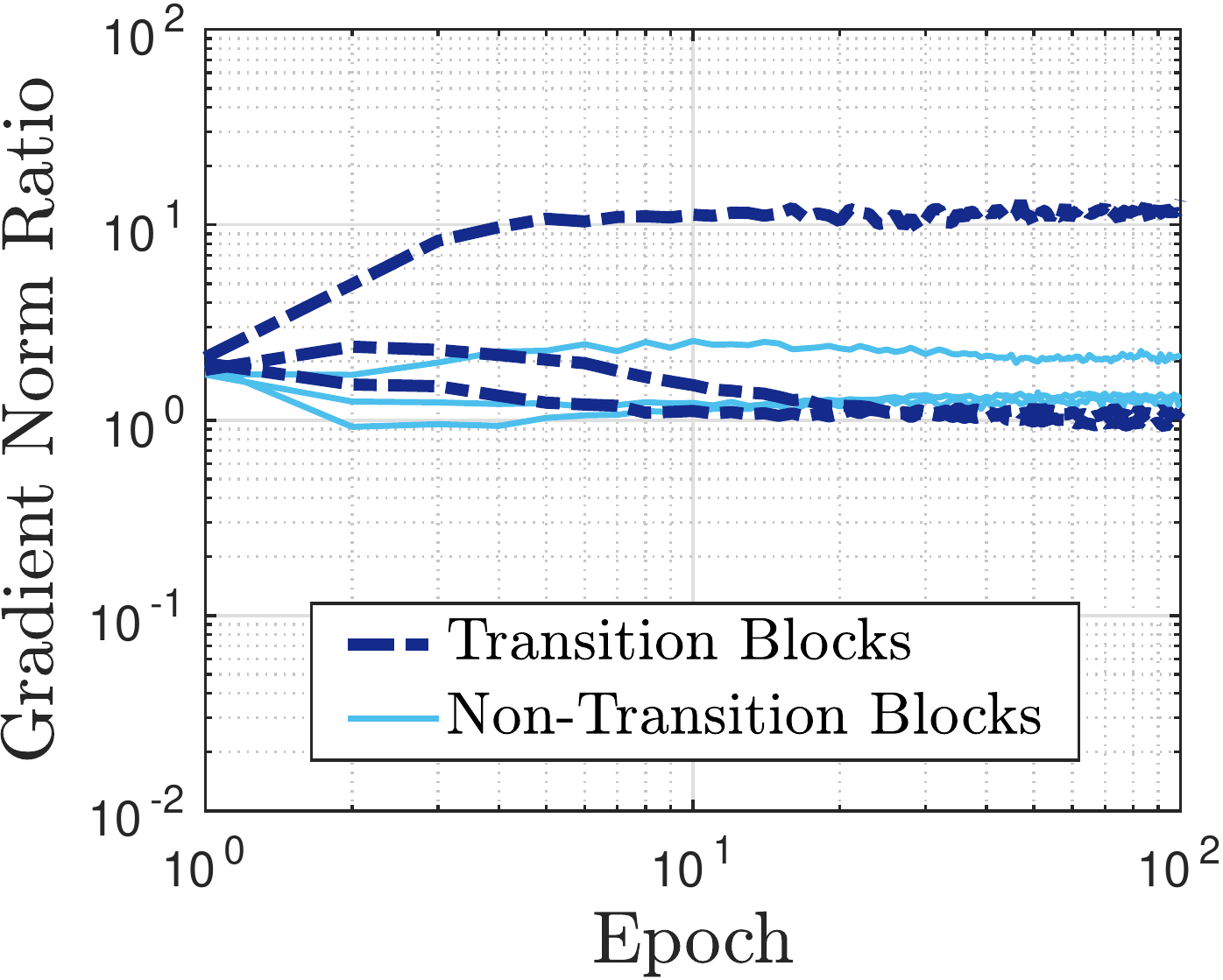}
}
\subfigure[ ResNet20]{
\label{subfig:ratio-epoch-resnet20}
\includegraphics[width=\tilefigwidth]{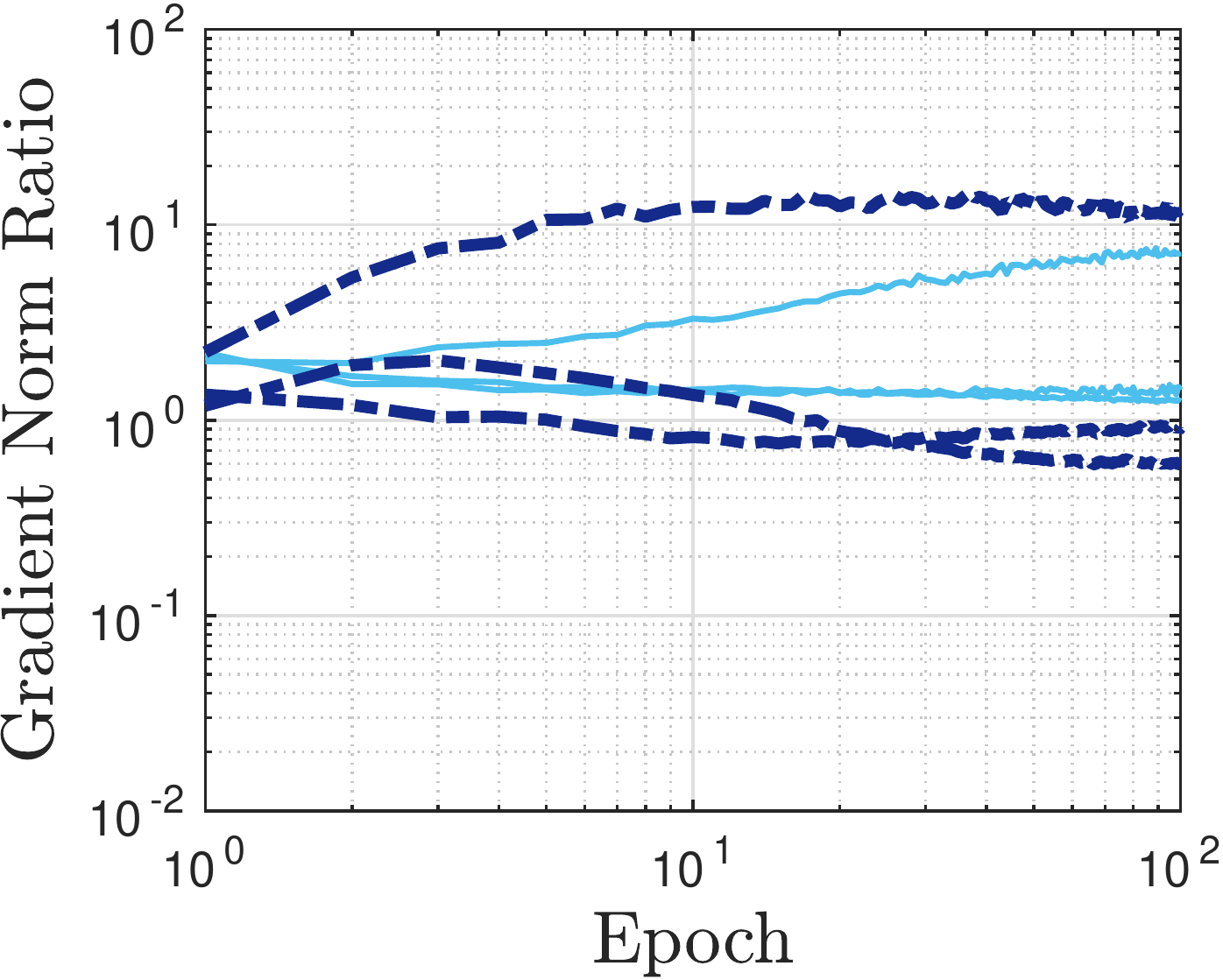}
}
\subfigure[ ProcResNet22]{
\label{subfig:ratio-epoch-flatnet20}
\includegraphics[width=\tilefigwidth]{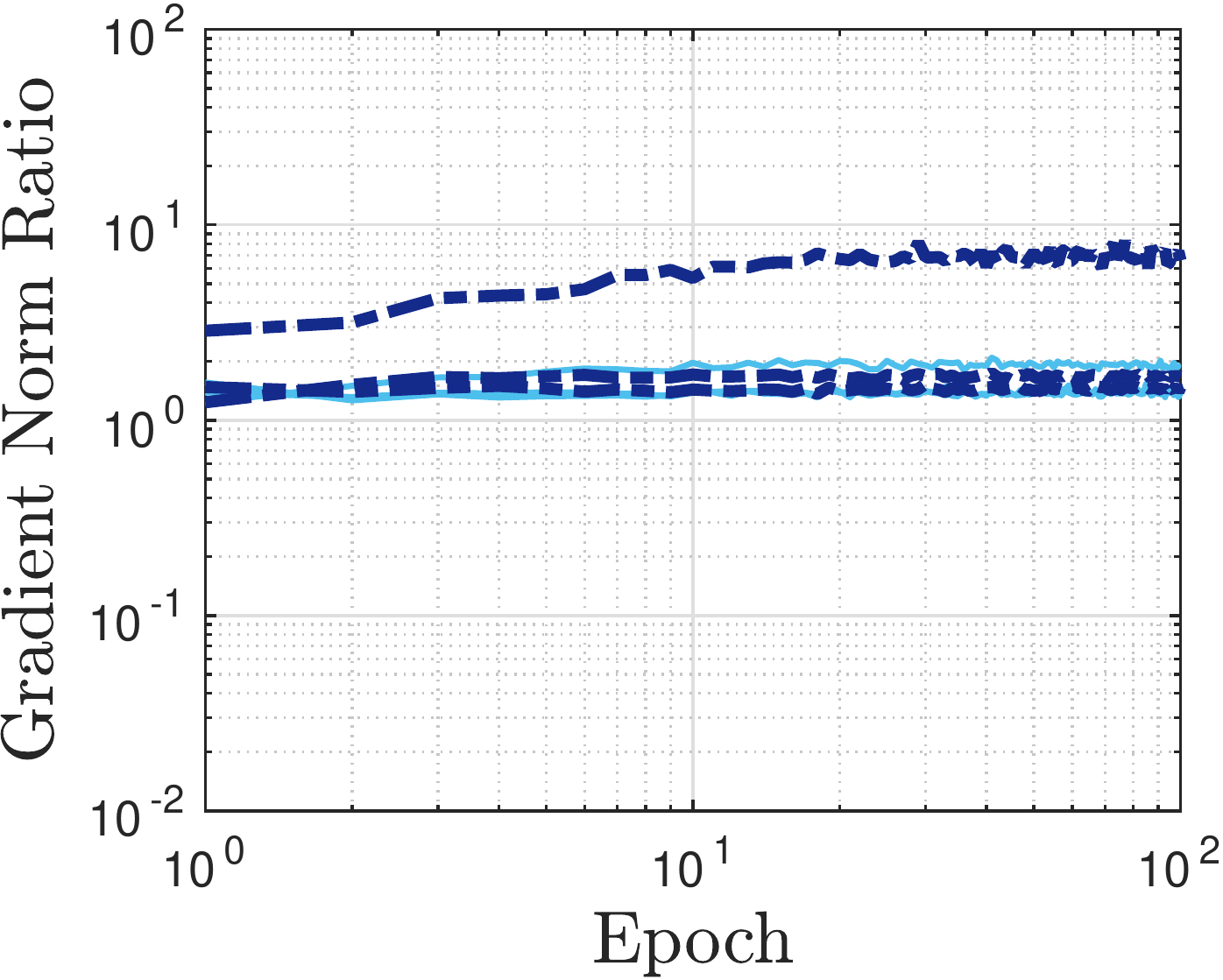}
}    
\subfigure[ Plain83]{
\label{subfig:ratio-epoch-plain83}
\includegraphics[width=\tilefigwidth]{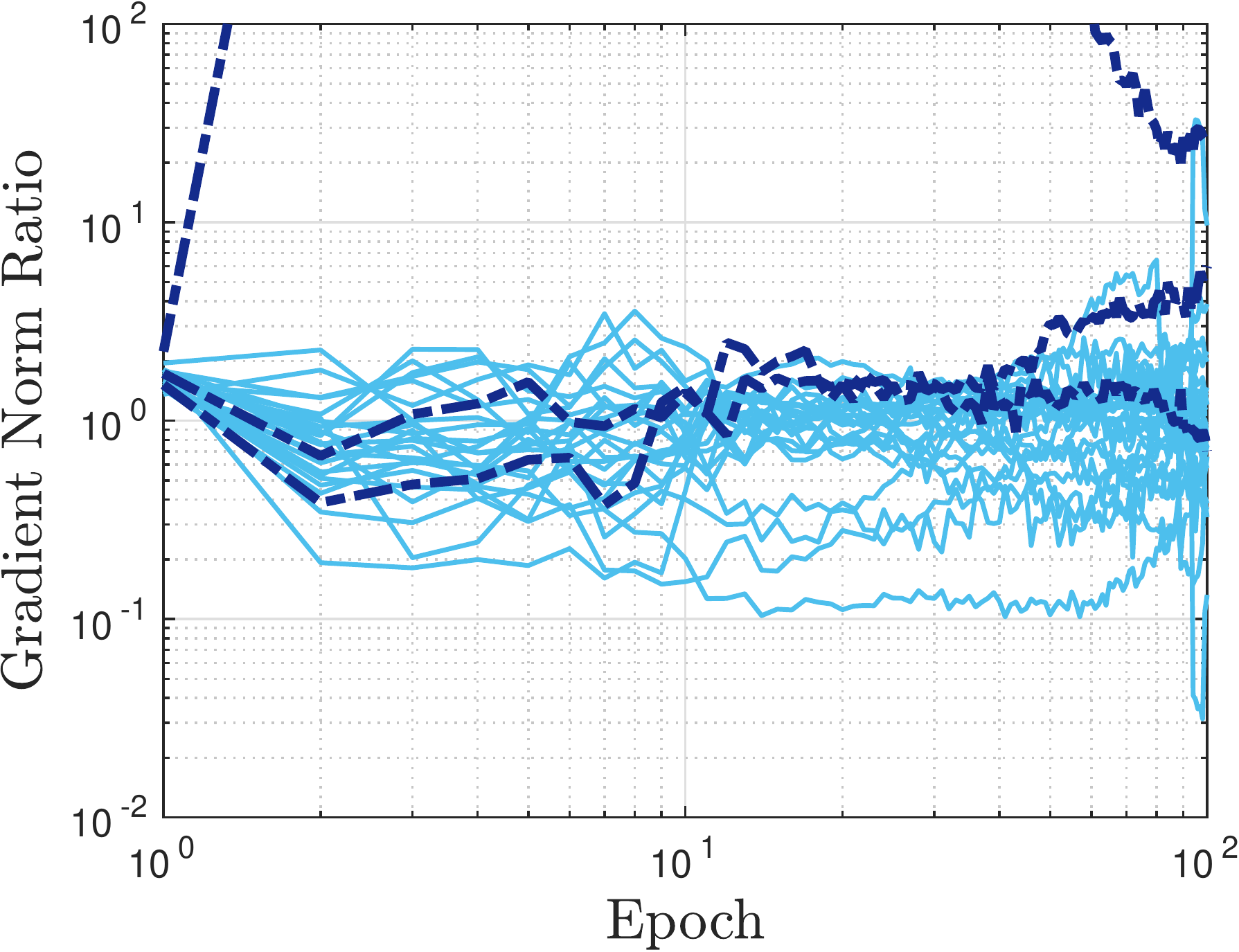}
}
\subfigure[ ResNet83]{
\label{subfig:ratio-epoch-resnet83}
\includegraphics[width=\tilefigwidth]{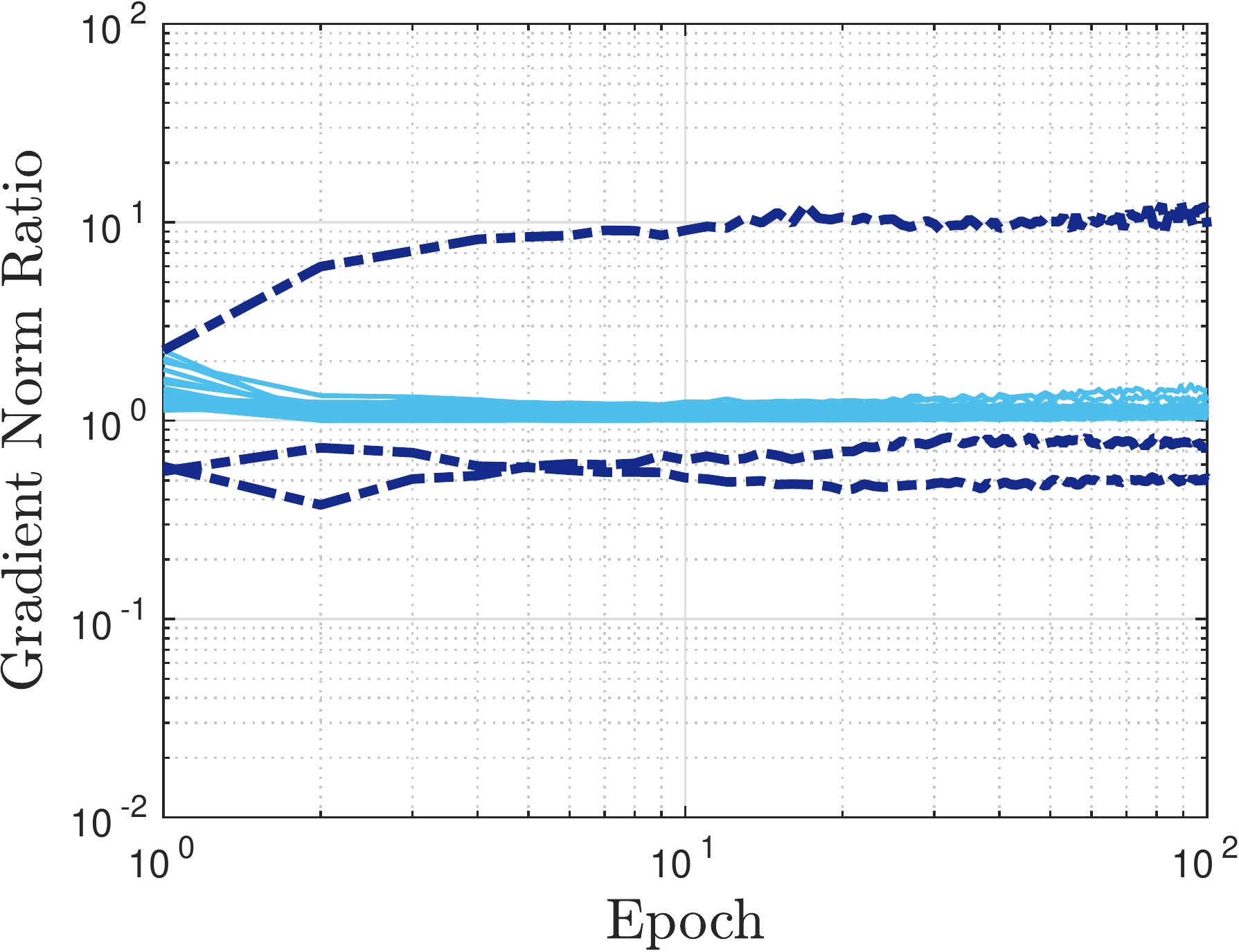}
}
\subfigure[ ProcResNet85]{
\label{subfig:ratio-epoch-flatnet83}
\includegraphics[width=\tilefigwidth]{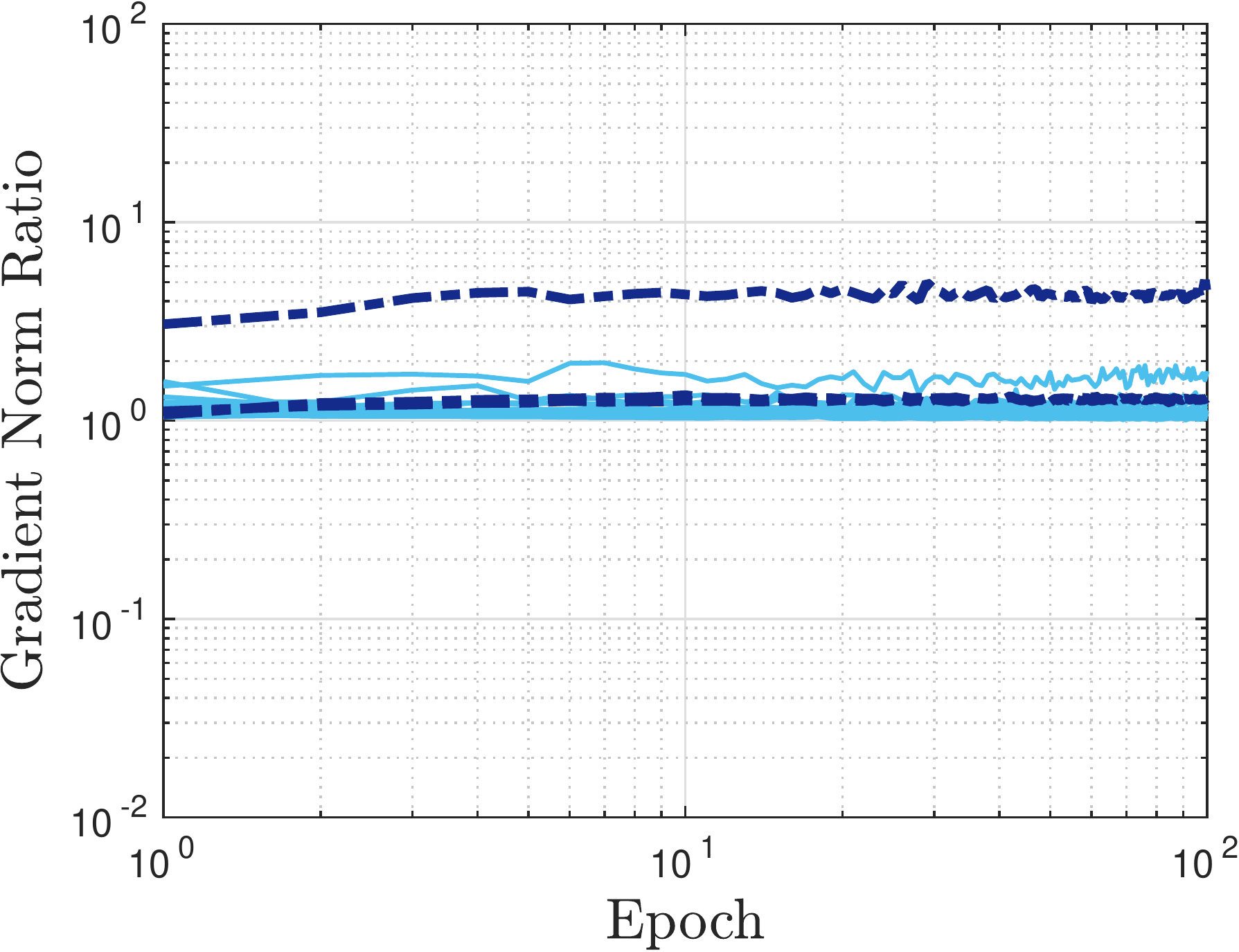}
}    
\subfigure[ Plain164]{
\label{subfig:ratio-epoch-plain164}
\includegraphics[width=\tilefigwidth]{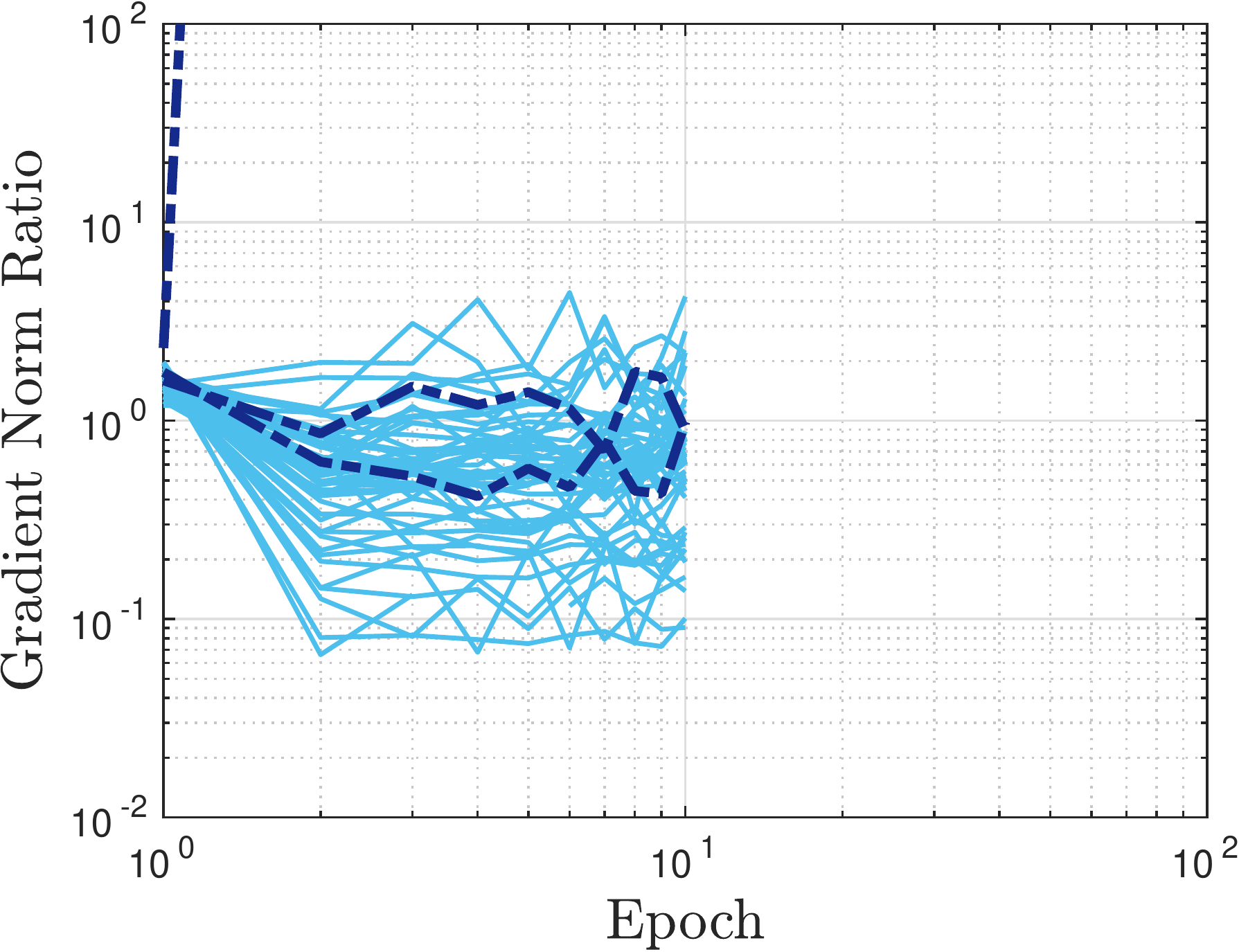}
}
\subfigure[ ResNet164]{
\label{subfig:ratio-epoch-resnet164}
\includegraphics[width=\tilefigwidth]{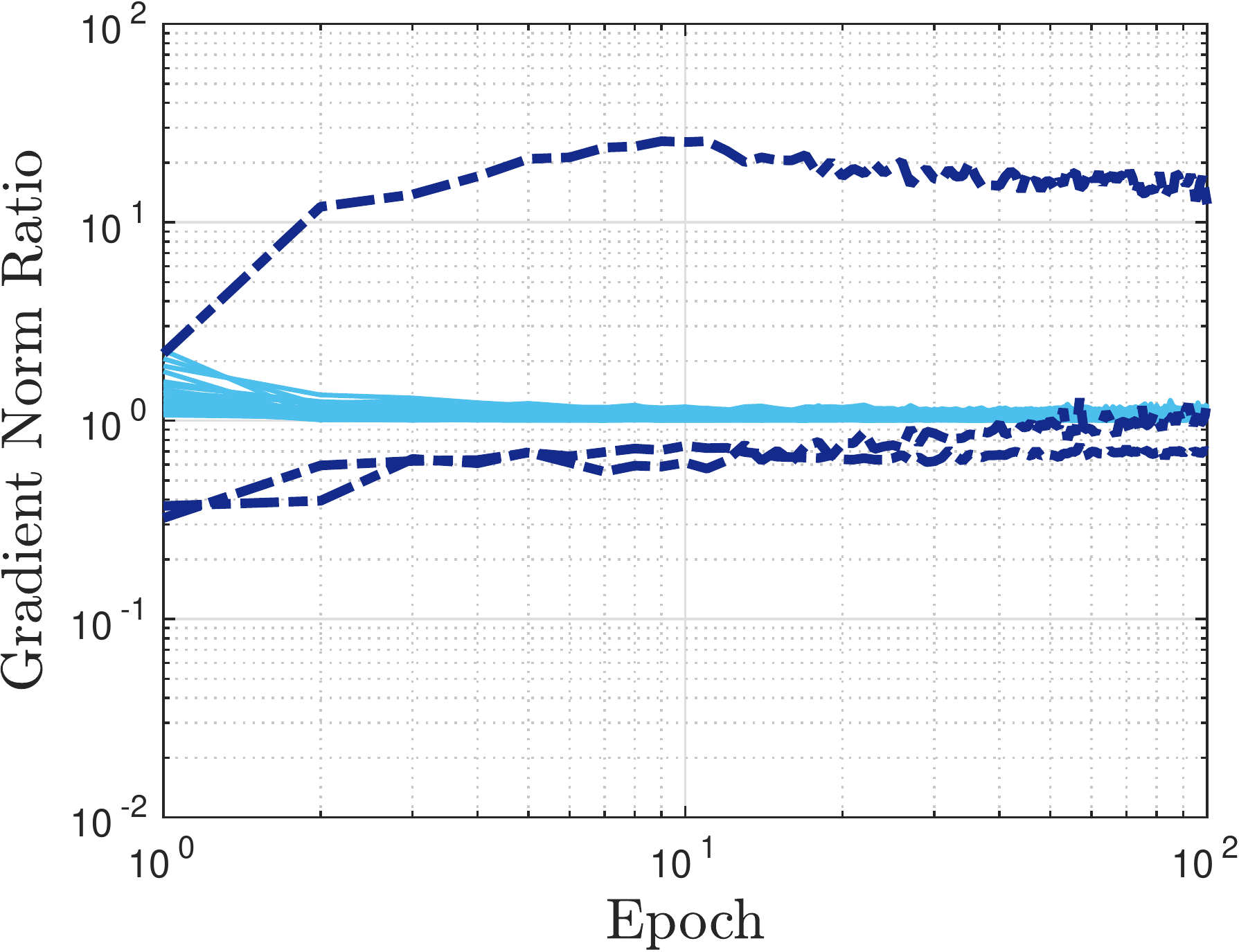}
}
\subfigure[ ProcResNet166]{
\label{subfig:ratio-epoch-flatnet164}
\includegraphics[width=\tilefigwidth]{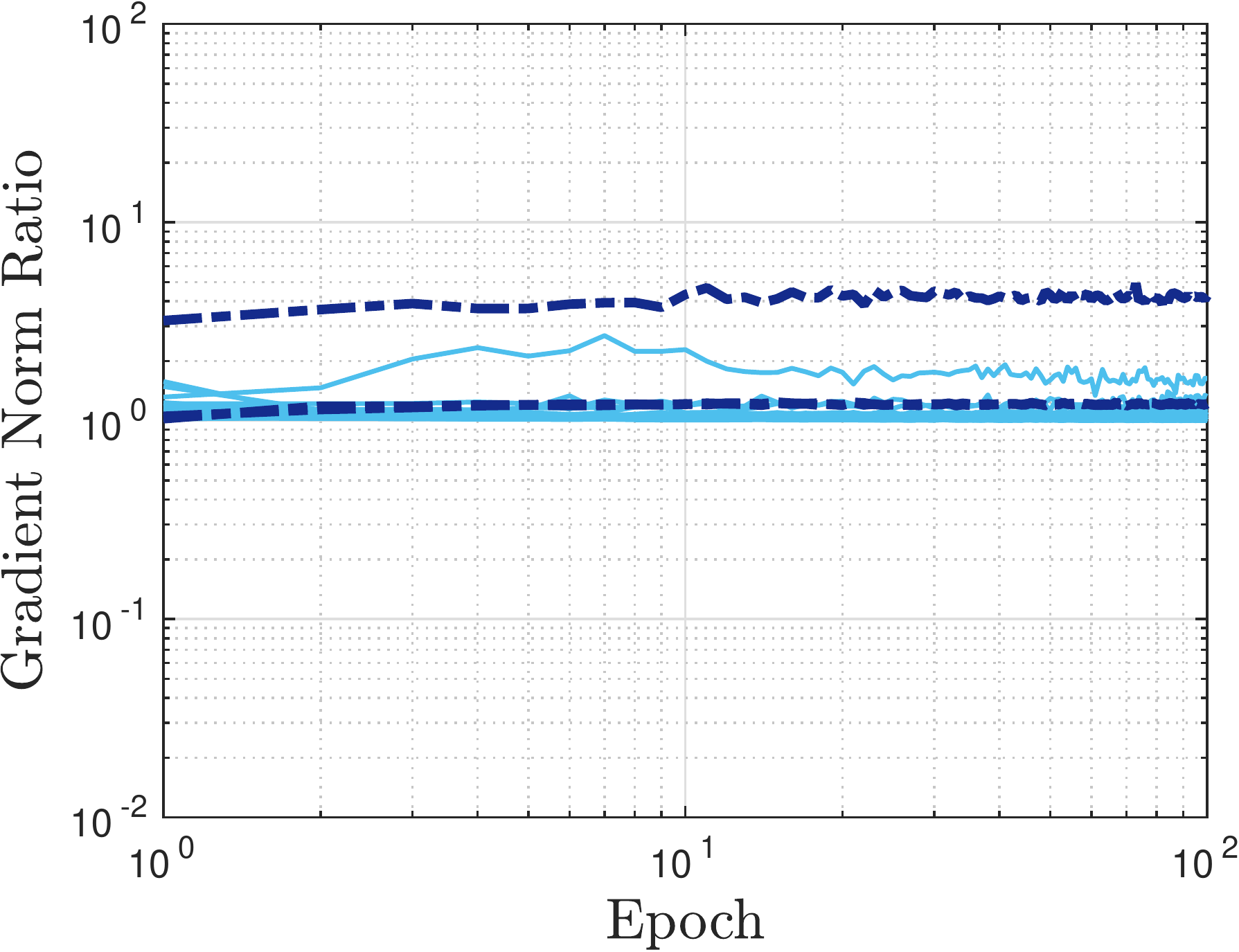}
}
\caption{ \small Training on CIFAR10. Gradient norm ratio over the first 100 epochs for transition blocks (blocks that change the dimension) and non-transition blocks (blocks that do not change the dimension). The darker color lines represent the transition blocks and the lighter color lines represent the non-transition blocks. The proposed regularization enhances the norm-preservation of the transition blocks effectively.
}
\label{fig:ratio-epoch}
\vspace{-5mm}
\end{figure*}

In the first set of experiments, the behavior of different architectures is studied as the function of network depth. To this end, the ratio of gradient norm at output to gradient norm at input, i.e., $\small \| \frac{\partial \mathcal{E}}{\partial \boldsymbol{x}_{l+1}} \|_2$ to $\| \frac{\partial \mathcal{E}}{\partial \boldsymbol{x}_{l}} \|_2$, is captured for all the residual blocks\footnote{In Plain architecture, which does not have skip connections, the gradient norm ratio is obtained at the input and output of its building blocks as depicted in Figure \ref{subfig:plain_block}.}, both transition and non-transition. Figure \ref{fig:ratio-epoch} shows the ratios for different blocks over training epochs. We ran the training for $100$ epochs, without decaying the learning rate. Plain network (Figure \ref{fig:ratio-epoch}.(g))  with 164 layers became numerically unstable and the training procedure stopped after 10 epochs.\par 

Several interesting observations can be made from this experiment:
\begin{itemize}[leftmargin=*]
\item This experiment emphasizes the fact that one needs more than careful initialization to make the network norm-preserving. Although the plain network is initially norm-preserving, the range of the gradient norm ratios becomes very large and diverges from 1, as the parameters are updated. However, ResNet and ProcResNet are able to enforce the norm-preservation during training procedure by using identity skip connection.
\item As the networks become deeper, the plain network becomes less norm preserving, which leads to numerical instability, optimization difficulty, and performance degradation. On the contrary, the non-transition blocks, the blocks with identity mapping as skip connection, of ResNet and ProcResNet become extra norm preserving. This is in line with our theoretical investigation for linear residual networks, which states that as we stack more residual blocks the network becomes extra norm-preserving. 
\item Comparing Plain83 (Figure \ref{subfig:ratio-epoch-plain83}) and Plain164 (Figure \ref{subfig:ratio-epoch-resnet164}) networks, it can be observed that most of the blocks behave fairly similar, except one transition block. Specifically, in Plain83, the gradient norm ratio of the first transition block goes up to $100$ in the first few epochs. But it eventually decreases and the network is able to converge. On the other hand, in Plain164, the gradient norm ratio of the same block becomes too large, which makes the network unable to converge. Hence, a single block is enough to make the optimization difficult and numerically unstable. This highlights the fact that it is necessary to enforce norm-preservation on all the blocks.  
\item In ResNet83 (Figure \ref{subfig:ratio-epoch-resnet83}) and ResNet164 (Figure \ref{subfig:ratio-epoch-resnet164}), it is easy to notice that only 3 transition blocks are not norm preserving. As mentioned earlier, due to multiplicative effect, the magnitude of the gradient will not be preserved because of these few blocks. 
\item The behaviors of ResNet and Plain architectures are fairly similar for depth of $20$. This was somehow expected, since it is known that the performance gain achieved by ResNet is more significant in deeper architectures \cite{He2016DeepRecognition}. However, even for depth of $20$, ProcResNet architecture is more norm preserving.
\item In ProcResNet, the only block that is less norm preserving is the first transition block, where the $3$ RGB channels are transformed into $64$ channels. This  is because, as we have shown in Figure \ref{fig:ratio_channels}, under such condition, where the number of input channels is very small, the assumption that energy of the gradient signal in the low-dimensional subspace, corresponding to the few non-zero singular values, is approximately proportional to the size of the subspace is violated with higher probability.
\item The ratios of the gradients for all networks, even the Plain network, are roughly concentrated around $1$, while training is stable. This shows that some degree of norm preservation exists in any stable network. However, as clear in the Plain network, such biases of the optimizer is not enough and we need skip connections to enforce norm preservation throughout training and to enjoy its desirable properties. Furthermore, although the transition blocks of ResNet tend to converge to be more norm preserving, our proposed modification enforces this property for all the epochs, which leads to stability and performance gain, as will be discussed shortly.
\end{itemize}

\begin{figure*}
\centering     
\subfigure[ Plain20 ]{
\label{subfig:loss-epoch-plain20}
\includegraphics[width=\tilefigwidth]{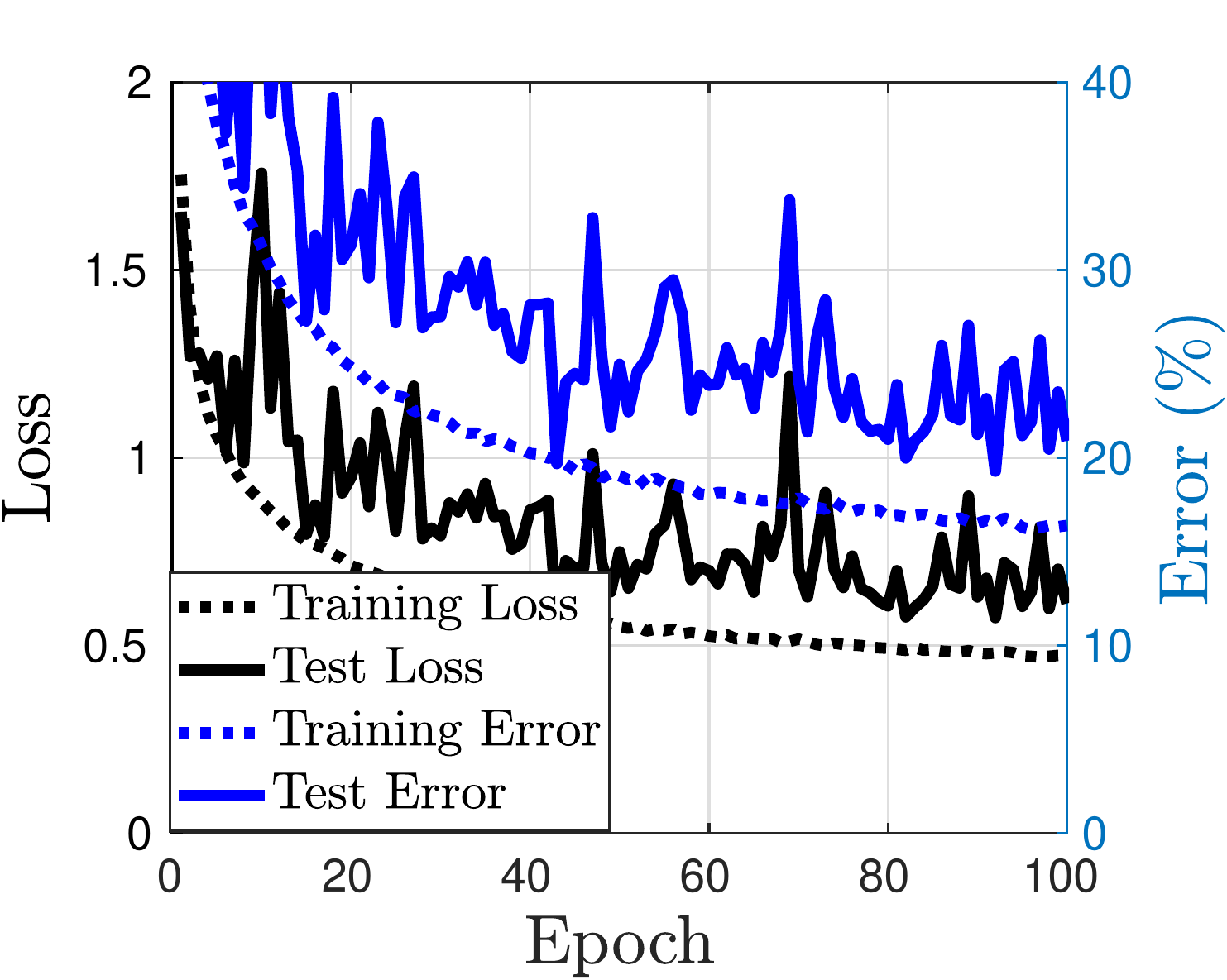}
}
\subfigure[ ResNet20]{
\label{subfig:loss-epoch-resnet20}
\includegraphics[width=\tilefigwidth]{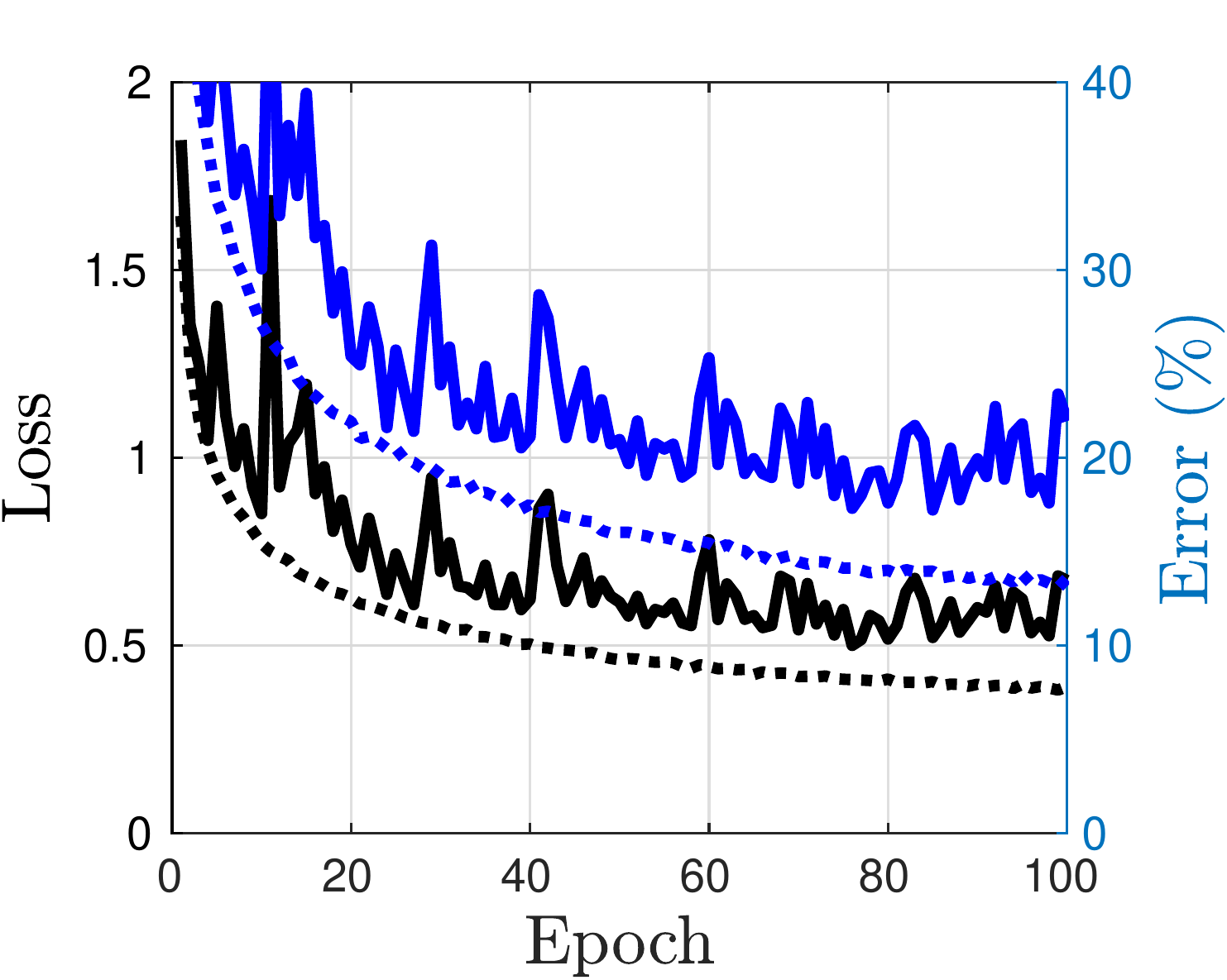}
}
\subfigure[ ProcResNet22]{
\label{subfig:loss-epoch-flatnet20}
\includegraphics[width=\tilefigwidth]{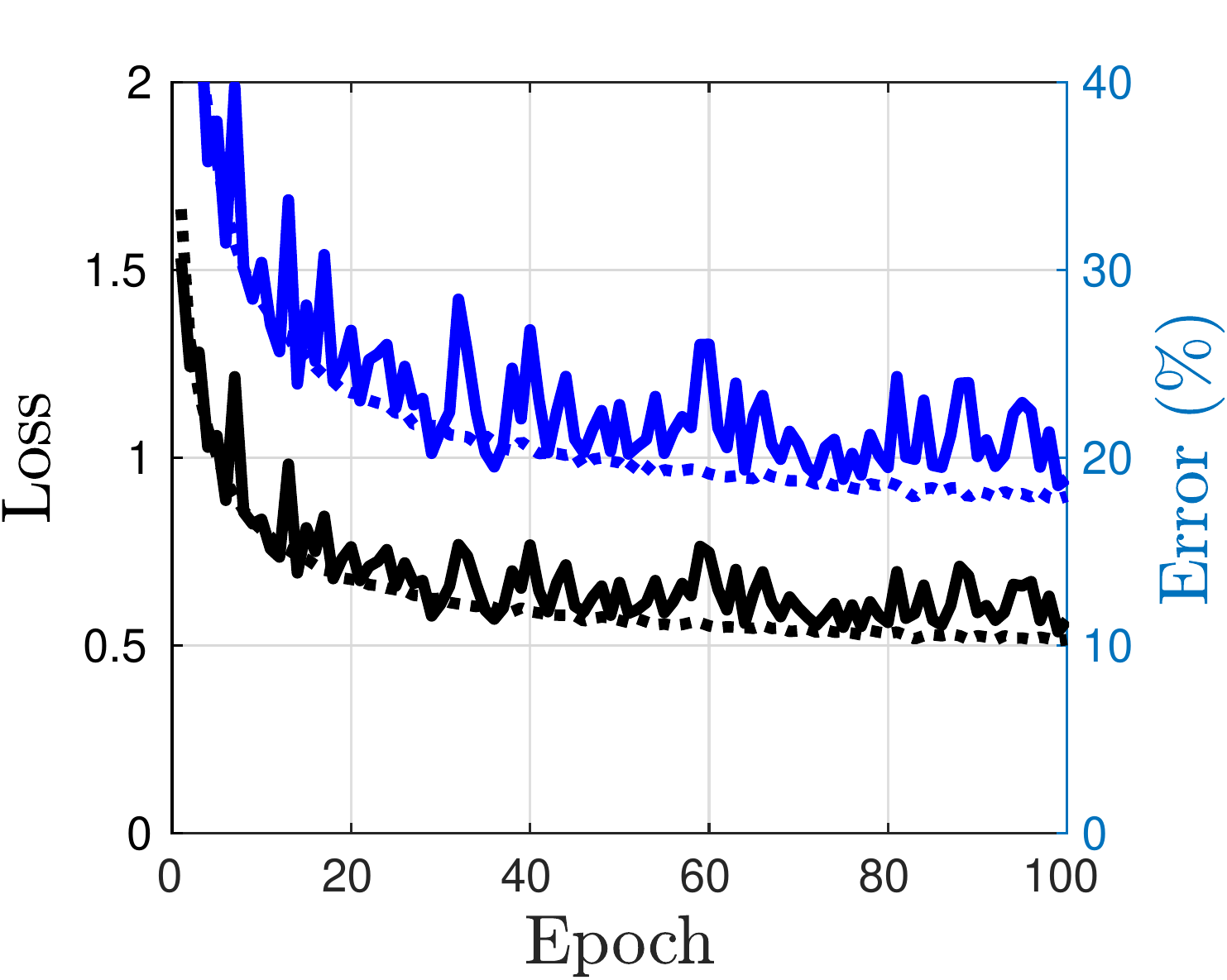}
}   
\subfigure[ Plain83]{
\label{subfig:loss-epoch-plain83}
\includegraphics[width=\tilefigwidth]{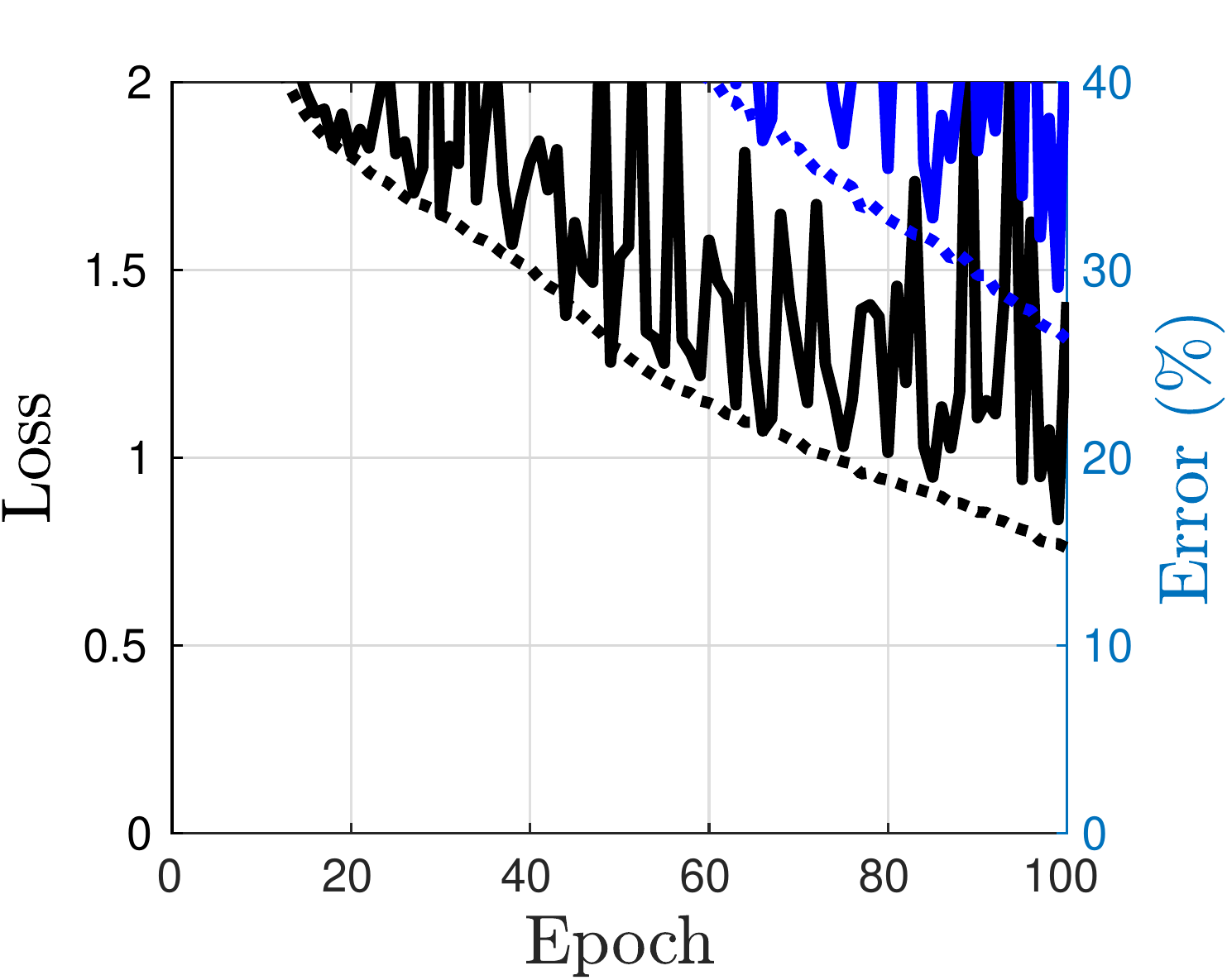}
}
\subfigure[ ResNet83]{
\label{subfig:loss-epoch-resnet83}
\includegraphics[width=\tilefigwidth]{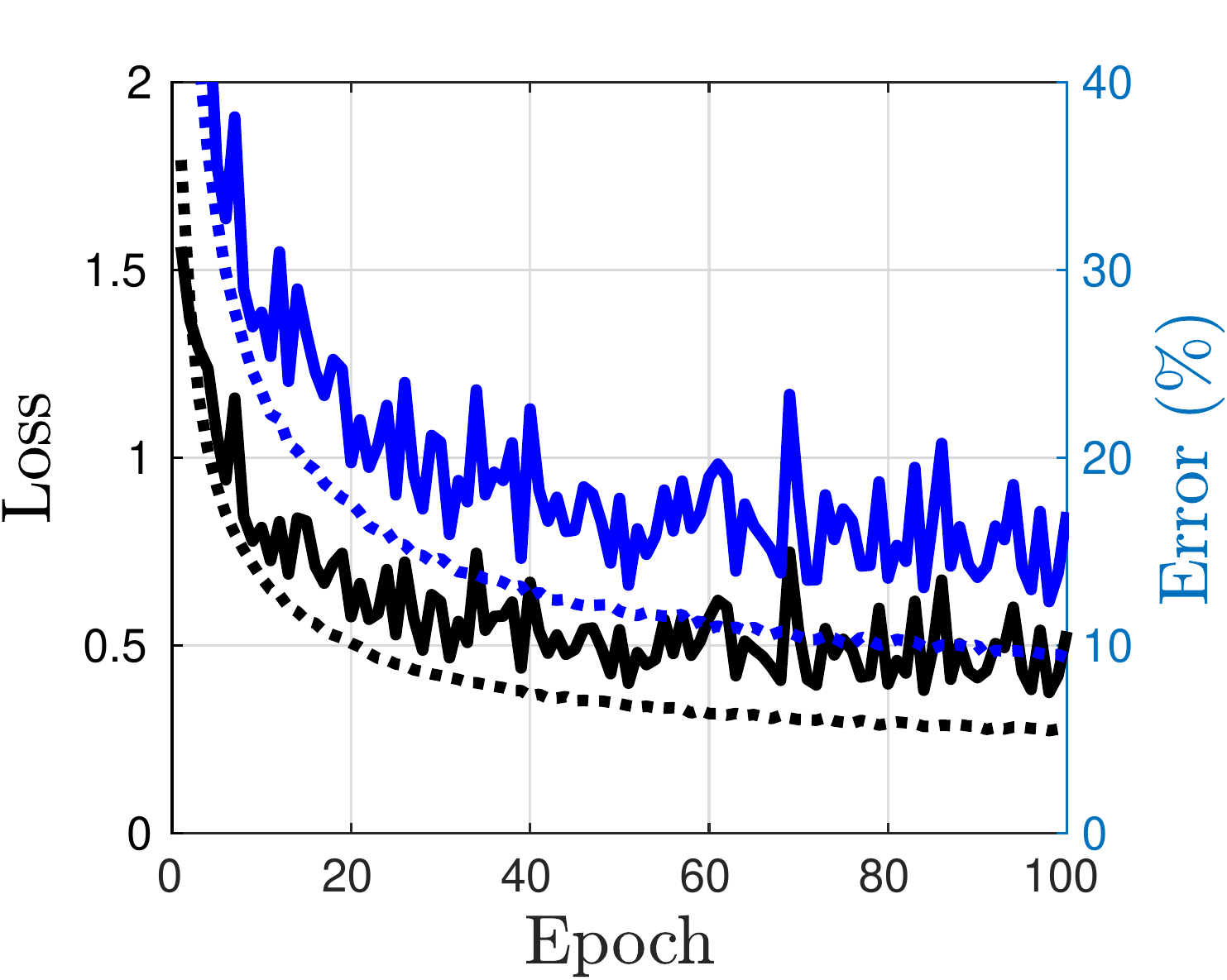}
}
\subfigure[ ProcResNet85]{
\label{subfig:loss-epoch-flatnet83}
\includegraphics[width=\tilefigwidth]{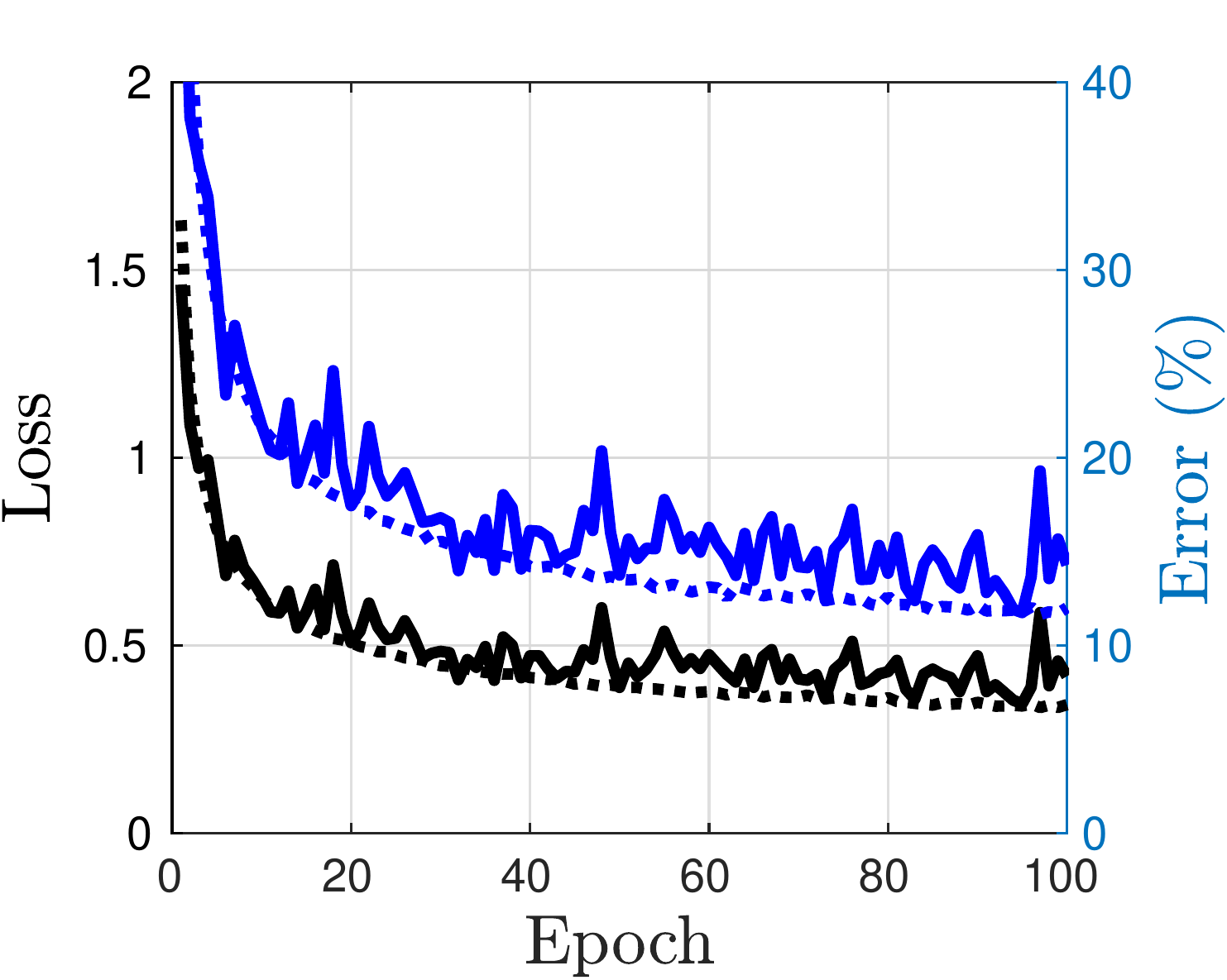}
}   
\subfigure[ Plain164]{
\label{subfig:loss-epoch-plain164}
\includegraphics[width=\tilefigwidth]{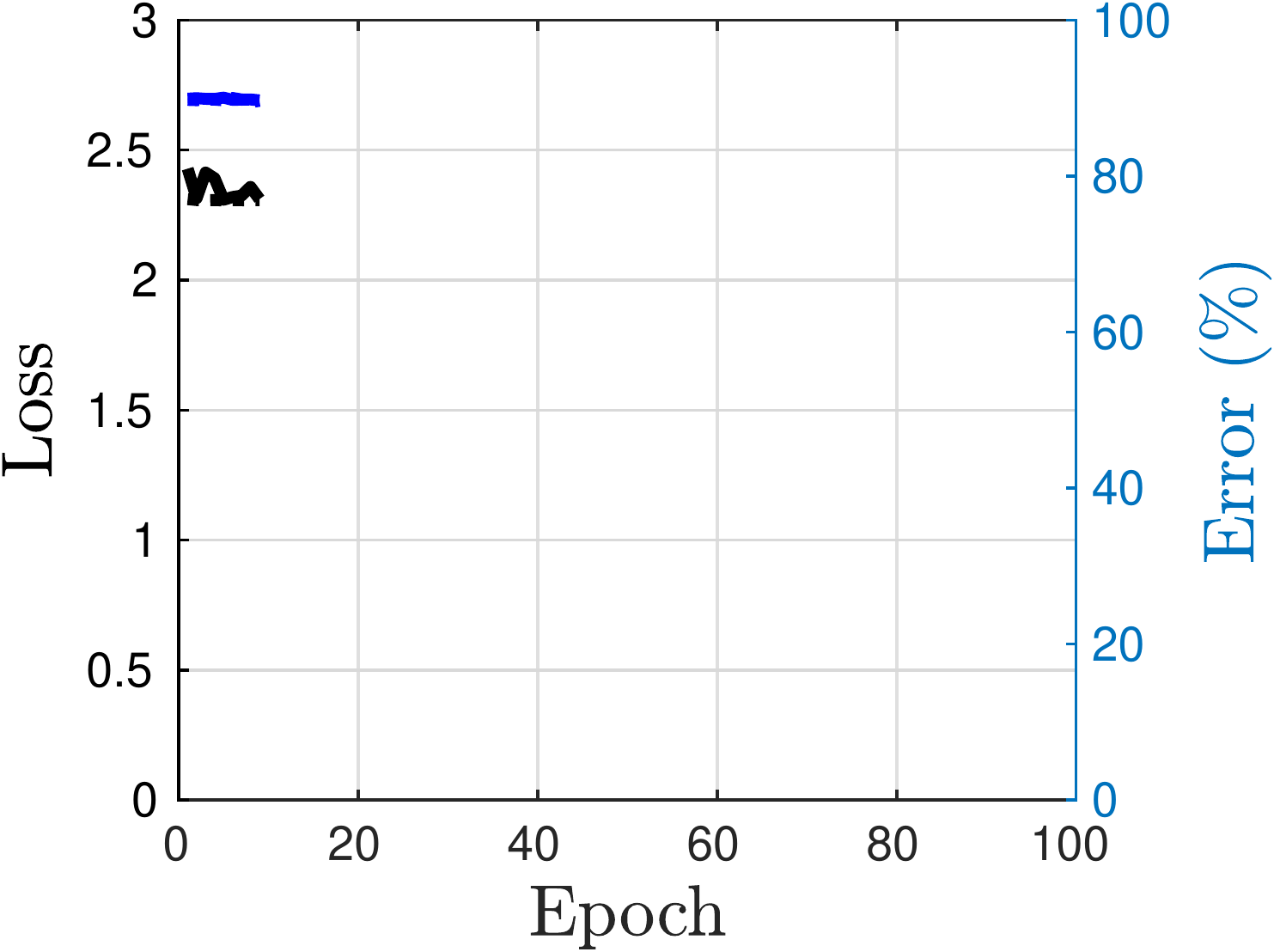}
}
\subfigure[ ResNet164]{
\label{subfig:loss-epoch-resnet164}
\includegraphics[width=\tilefigwidth]{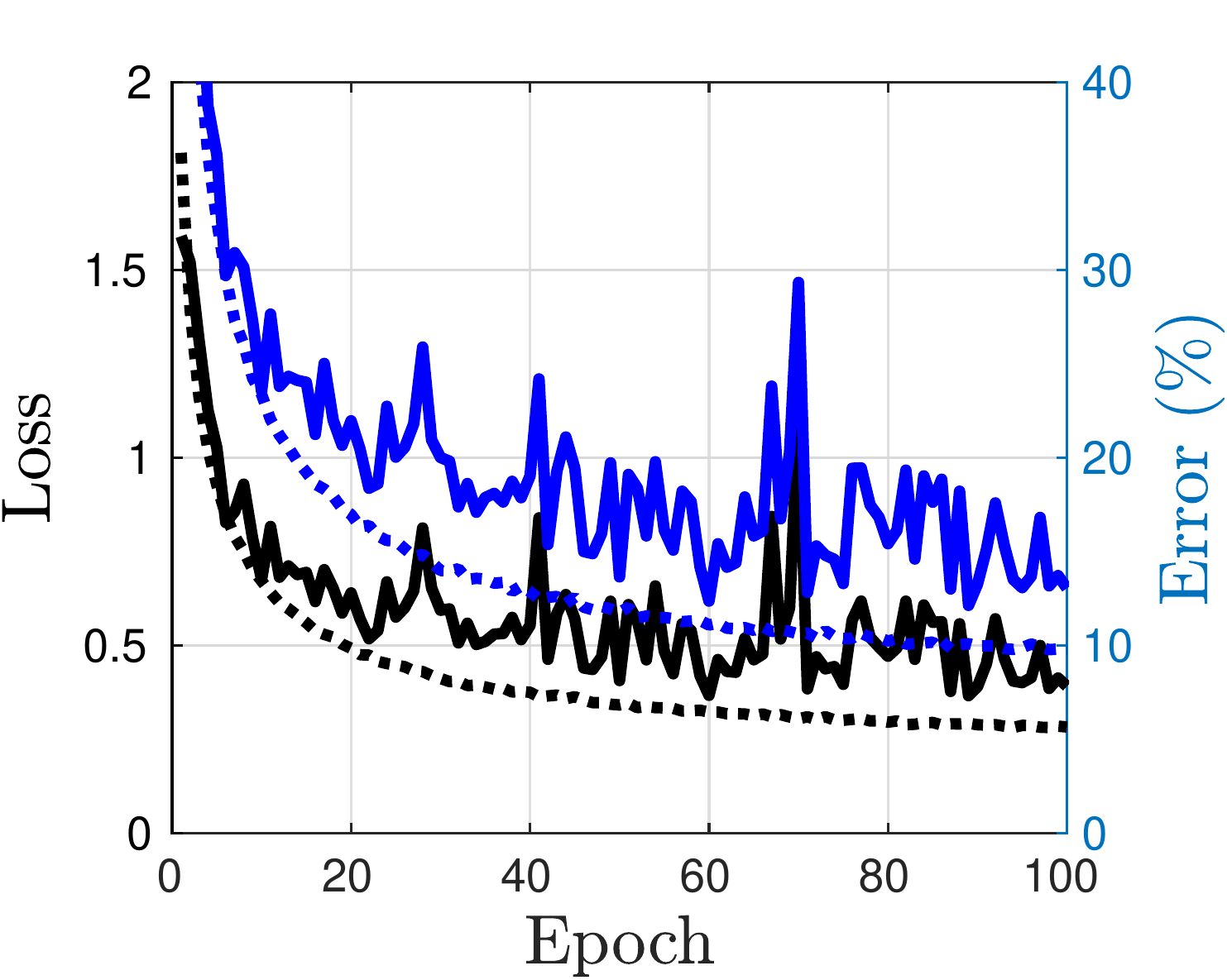}
}
\subfigure[ ProcResNet166]{
\label{subfig:loss-epoch-flatnet164}
\includegraphics[width=\tilefigwidth]{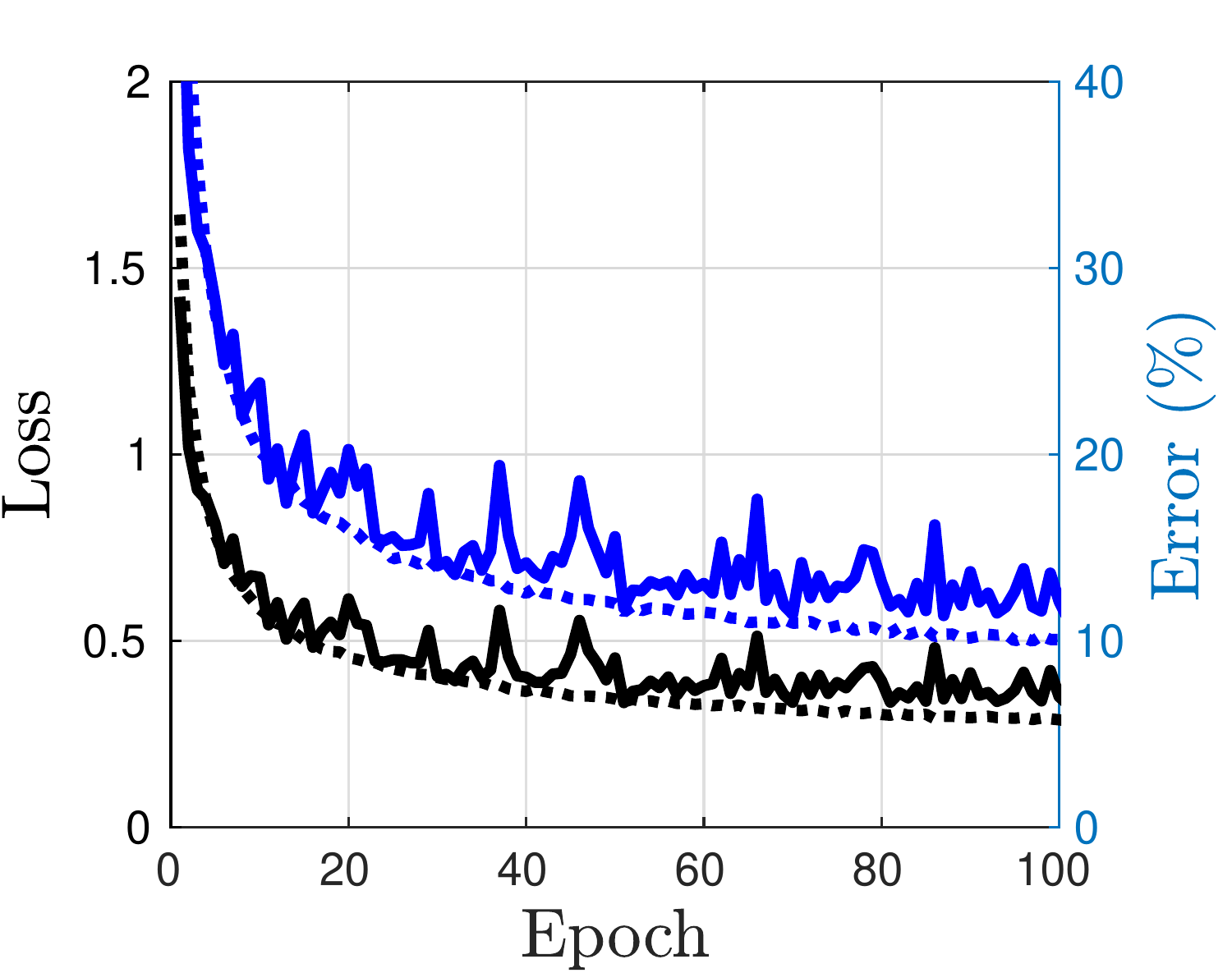}
}
\caption{ \small Loss (black lines) and error (blue lines) during training procedure on CIFAR10. Solid lines represent the test values and dotted lines represent the training values. This experiments shows how the residual connections enhance the stability of the optimization and how the proposed regularization enhances the stability even further.
}
\label{fig:loss-epoch}
\end{figure*}

This experiment both validates our theoretical arguments and clarifies some of the inner workings of ResNet architecture, and also shows the effectiveness of the proposed modifications in ProcResNet. It is evident that, as stated in Theorem \ref{thm:norm-preservation-nonlinear}, addition of identity skip connection makes the blocks increasingly extra norm-preserving, as the network becomes deeper. Furthermore, we have been able to enhance 
norm-preserving property by applying the changes proposed in Section \ref{sec:Transition Block}. 

\subsection{Optimization Stability and Learning Dynamics}
\label{subsec:exp_stability}
In the next set of experiments, numerical stability and learning dynamics of different architectures is examined. For that, loss and classification error, in both training and testing phases, are depicted in Figure \ref{fig:loss-epoch}. This experiment illustrates that how optimization stability of deep networks is improved significantly, and how it can be further improved by having norm preservation in mind during the design procedure. \par 

As depicted in Figure \ref{fig:loss-epoch}, unlike the plain network, training error and loss curves corresponding to ResNet and ProcResNet architectures are consistently decreasing as the number of layers increases, which was the main motivation behind proposing residual blocks \cite{He2016DeepRecognition}. Moreover, Figure \ref{subfig:loss-epoch-plain20} and Figure \ref{subfig:loss-epoch-plain83} show that the plain networks have a poor generalization performance. The fluctuations in testing error shows that the points along the optimization path of the plain networks do not generalize well. This issue is also present, to a lesser extent, in ResNet architecture. Comparing Figure \ref{subfig:loss-epoch-resnet164} and \ref{subfig:loss-epoch-resnet20}, we can see that the fluctuations are more apparent in deeper ResNet networks. However, in proposed ProcResNet architecture, the amplitude of the fluctuations is smaller and does not change as the depth of the network is increased. This indicates that ProcResNet architecture is taking a better path toward the optimum and has better generalization performance. \par 

To quantify this, we repeated the training $10$ times with different random seeds and measured the generalization gap, which is the difference between training and testing classification error, for the first $100$ epochs. Table \ref{table:gen_gap} shows the mean and max generalization gap, averaged over $10$ different runs. This results indicate that generalization gap of ProcResNet is smaller. Furthermore, the generalization gap fluctuates far less significantly for ProcResNet, as quantified by the difference between mean generalization gap and maximum generalization gap. 

\begin{table}
\small
\centering
\caption{\small Mean and maximum generalization gap (\%) during the first $100$ epochs of trainingon CIFA10 for different network architectures, averaged over $10$ runs.}
\label{table:gen_gap}
\begin{tabular}{|c|c|c|c|c|c|c|}
\hline
\multirow{ 2}{*}{Depth} & \multicolumn{2}{|c|}{Plain} & \multicolumn{2}{|c|}{ResNet} & \multicolumn{2}{|c|}{ProcResNet}  \\ \cline{2-7}
 & mean & max & mean & max & mean & max \\ \hline \hline
$20$ & 6.7 & 20.0 & 5.5 & 23.1 & \bf{2.3} & \bf{8.3} \\ \hline
$83$ & 7.5 & 30.1 & 5.1 & 12.5 & \bf{2.0} & \bf{7.7}\\ \hline
$164$ & -  & -    & 5.2 & 18.7 & \bf{3.3} & \bf{8.7} \\ \hline
\end{tabular}
\end{table}

The implication of this is that  by modifying only a few blocks in an extremely deep network, it is possible to make the network more stable and improve the learning dynamics. This emphasizes the utmost importance of norm-preservation of all blocks in avoiding optimization difficulties of very deep networks. Moreover, this sheds light on the reasons why architectures using residual blocks, or identity skip connection in general, perform so well and are easier to optimize. \par 

\subsection{Classification Performance}
\label{subsec:exp_accuracy}
In this section, we show the impact of the proposed norm-preserving transition blocks on the classification performance of ResNet. Table \ref{table:error} compares the performance of ResNet and its EraseReLU version, as proposed in \cite{Dong2017Eraserelu:Networks}, with and without the proposed transition blocks. 
The results for standard ResNet are the best results reported by \cite{He2016IdentityNetworks} and \cite{Dong2017Eraserelu:Networks} and the results of ProcResNet are obtained by making the proposed changes to standard ResNet implementation. \par 

Table \ref{table:error} shows that the proposed network performs better than the standard ResNet. This performance gain comes with a slight increase the number of parameters (under 1\%) and by changing only $3$ blocks. The total number of residual blocks for ResNet164 and ResNet1001 are  $54$ and $333$, respectively. 
Furthermore, Figure \ref{fig:error-depth} compares the parameter efficiency of ResNet and ProcResNet architectures. The results indicate that the proposed modification can improve the parameter efficiency significantly. For example, ProcResNet274 (with 2.82M parameter) slightly outperforms ResNet1001 (with 10.32M parameters). This translates into about 4x reduction in the number of parameters to achieve the same classification accuracy.
This illustrates that we are able to improve the performance by changing a tiny portion of the network and emphasizes the importance of norm-preservation in the performance of neural networks. \par

\begin{figure}
\centering 
\includegraphics[width=\figurewidth]{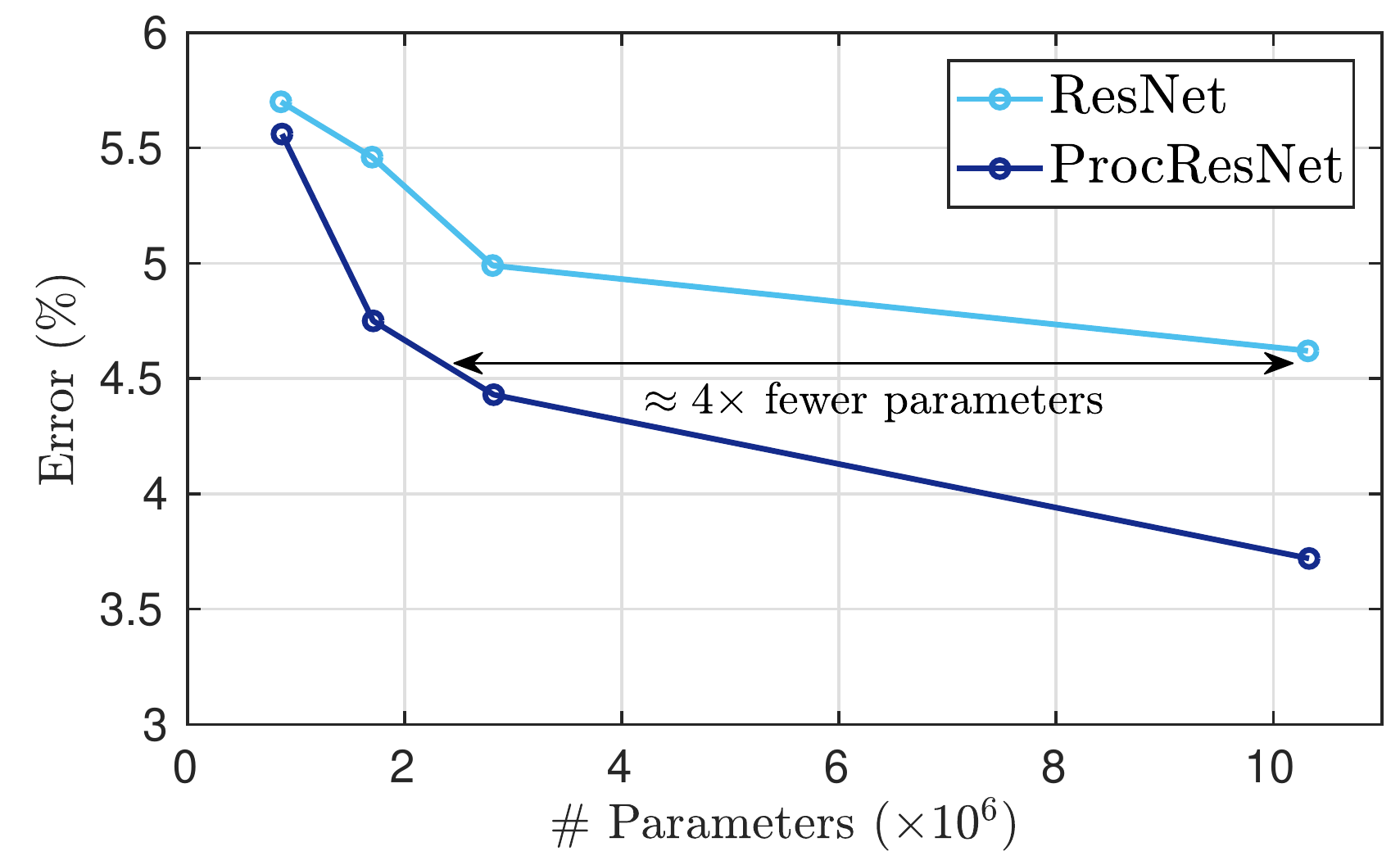}
\caption{ \small  Comparison of the parameter efficiency on CIFAR10 between ResNet and ProcResNet.}
\label{fig:error-depth}    
\end{figure}

Finally, Table \ref{table:ablation} investigates the impact of changing the architecture, i.e., moving the convolution layer from the skip connection to before the skip connection, and performing the proposed regularization, separately. Each of these design components have positive impact on the performance of the network, as both of them enhance the norm preservation of the transition block, which further highlights the impact of norm preservation on the performance of the network.

\begin{table*}
\setlength\extrarowheight{2pt}
\centering
\caption{\small  Performance of different  methods on CIFAR-10 and CIFAR-100 using moderate data augmentation (flip/translation). The modified transition blocks in ProcResNet can improve the accuracy of ResNet significantly.}
\label{table:error}
\begin{tabular}{|c|c|c|c|c|c|}
\hline
\multirow{ 2}{*}{Architecture} & \multirow{ 2}{*}{Setting} & \multirow{ 2}{*}{\# Params}  & \multirow{ 2}{*}{Depth}    & \multicolumn{2}{|c|}{Error (\%)} \\ \cline{5-6}
 & & & & CIFAR10 & CIFAR100 \\ \hline \hline
 
\multirow{ 4}{*}{ResNet \cite{He2016IdentityNetworks}} & \multirow{ 2}{*}{pre-activation} & 1.71M & 164 & 5.46 & 24.33 \\ \cline{3-6}
  &  & 10.32M & 1001 & 4.62  & 22.71  \\ \cline{2-6}
  
& \multirow{ 2}{*}{ErasedReLU\cite{Dong2017Eraserelu:Networks}} & 1.70M & 164 & 4.65 & 22.41 \\ \cline{3-6}
  &  & 10.32M & 1001 & 4.10  & 20.63  \\ \hline 

\multirow{ 4}{*}{ProcResNet} & \multirow{ 2}{*}{pre-activation} & 1.72M & 166 & 4.75 & 22.61 \\ \cline{3-6}
&  & 10.33M & 1003 & 3.72 & 19.99  \\ \cline{2-6}

& \multirow{ 2}{*}{ErasedReLU\cite{Dong2017Eraserelu:Networks}} & 1.72M & 166 & 4.53 & 21.91 \\ \cline{3-6}
&  & 10.33M & 1003 & \bf 3.42 & \bf 18.12  \\ \hline
\end{tabular}
\end{table*}

\begin{table}
\centering
\caption{\small Ablation study on ResNet with 164 layers on CIFAR100.}
\label{table:ablation}
\begin{tabular}{|c|c|c|}
\hline
Transition Block & Projection & Error (\%)\\ \hline
Original & No & 24.33 \\ \hline
Modified & No & 23.06 \\ \hline
Modified & Yes & 22.61 \\ \hline
\end{tabular}
\end{table}

\section{Conclusions}
\label{sec:conclusions}

This paper theoretically analyzes building blocks of residual networks  and demonstrates that adding identity skip connection makes the residual blocks norm-preserving. Furthermore, the norm-preservation is enforced during the training procedure, which makes the optimization stable and improves the performance. This is in contrast to initialization techniques, such as \cite{Glorot2010UnderstandingNetworks}, which ensure norm-preservation only at the beginning of the training. Our experiments validate our theoretical investigation by showing that
\begin{enumerate*}[label=(\roman*)]
\item identity skip connection results in norm preservation, 
\item residual blocks become extra norm-preserving as the network becomes deeper, and
\item the training can become more stable through enhancing the norm preservation of the network.
\end{enumerate*}
Our proposed modification of ResNet, Procrustes ResNet, enforces norm-preservation on the transition blocks of the network and is able to achieve better optimization stability and performance. For that we propose an efficient regularization technique to set the nonzero singular values of the convolution operator, without performing singular value decomposition.
Our findings can be seen as design guidelines for very deep architectures. By having norm-preservation in mind, we will be able to train extremely deep networks and alleviate the optimization difficulties of such networks. \par
 
\section{Acknowledgements}
This research is based upon work supported in parts by the National Science Foundation under Grants No. 1741431 and CCF-1718195 and the Office of the Director of National Intelligence (ODNI), Intelligence Advanced Research Projects Activity (IARPA), via IARPA R\&D Contract No. D17PC00345. The views, findings, opinions, and conclusions or recommendations contained herein are those of the authors and should not be interpreted as necessarily representing the official policies or endorsements, either expressed or implied, of the NSF, ODNI, IARPA, or the U.S. Government. The U.S. Government is authorized to reproduce and distribute reprints for Governmental purposes notwithstanding any copyright annotation thereon.

\balance
\bibliography {Mendeley}
\bibliographystyle{ieeetr}

\begin{IEEEbiography} [{\includegraphics[width=1in,height=1.25in,clip,keepaspectratio]{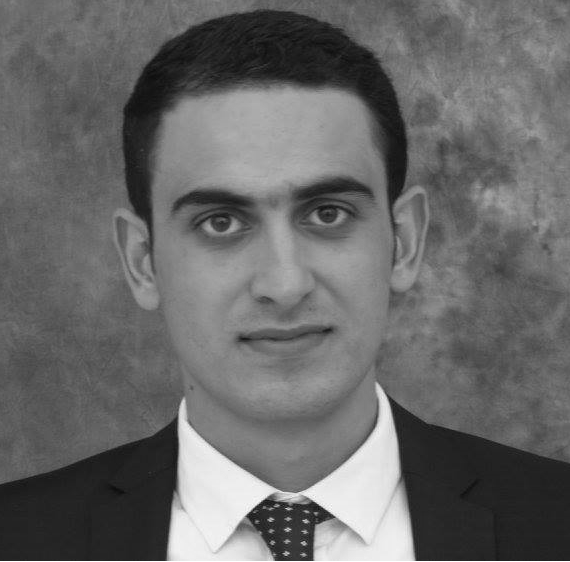}}]
{Alireza Zaeemzadeh}
(S`11) received the B.S. degree in electrical engineering from the University of Tehran, Tehran, Iran, in 2014. He is currently working toward the Ph.D. degree in electrical engineering at the University of Central Florida. His current research interests lie in the areas of machine learning, linear algebra, and optimization. Alireza`s awards and honors include University of Central Florida Multidisciplinary Doctoral Fellowship and Graduate Dean's Fellowship.
\end{IEEEbiography}

\begin{IEEEbiography} [{\includegraphics[width=1in,height=1.25in,clip,keepaspectratio]{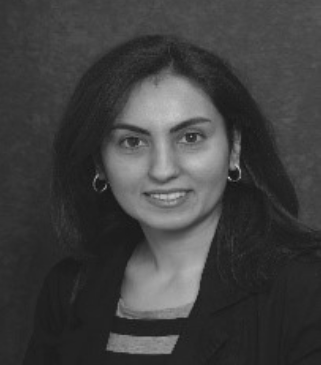}}]
{Nazanin Rahnavard}
 (S`97-M`10-SM'19) received   her Ph.D.  in  the  School  of  Electrical  and  Computer Engineering at the Georgia Institute of Technology, Atlanta, in 2007. She is currently an Associate Professor in the Department of Electrical and Computer Engineering  at  the  University  of  Central  Florida, Orlando,  Florida.  Dr.  Rahnavard  is  the  recipient of  NSF  CAREER  award  in  2011 and 2020 UCF's College of Engineering and Computer Science Excellence in Research Award.  She  has  interest and  expertise  in  a  variety  of  research  topics  in  the communications, networking, signal processing, and machine learning areas. She   serves   on   the   editorial   board   of   the Elsevier  Journal  on  Computer  Networks  (COMNET)  and  on  the  Technical Program Committee of several prestigious international conferences.
\end{IEEEbiography}

\begin{IEEEbiography}[{\includegraphics[width=1in,height=1.25in,clip,keepaspectratio]{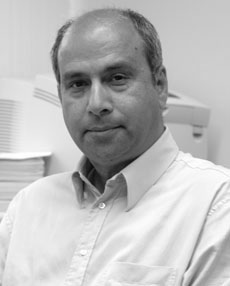}}]{Mubarak Shah}, the UCF Trustee chair professor, is the founding director of the Center for Research in Computer Vision at the University of Central Florida (UCF).  He is a fellow of the NAI, IEEE, AAAS, IAPR, and SPIE. He is an editor of an international book series on video computing, was editor-in-chief of Machine Vision and Applications journal, and an associate editor of ACM Computing Surveys journal. He was the program cochair of CVPR 2008, an associate editor of the IEEE T-PAMI, and a guest editor of the special issue of the International Journal of Computer Vision on Video Computing. His research interests include video surveillance, visual tracking, human activity recognition, visual analysis of crowded scenes, video registration, UAV video analysis, and so on. He has served  as an ACM distinguished speaker and IEEE distinguished visitor speaker.  He is a recipient of ACM SIGMM Technical Achievement award; IEEE Outstanding Engineering Educator Award; Harris Corporation Engineering Achievement Award; an honorable mention for the ICCV 2005 Where Am I? Challenge Problem; 2013 NGA Best Research Poster Presentation; 2nd place in Grand Challenge at the ACM Multimedia 2013 conference; and runner up for the best paper award in ACM Multimedia Conference in 2005 and 2010. At UCF he has received Pegasus Professor Award; University Distinguished Research Award; Faculty Excellence in Mentoring Doctoral Students; Scholarship of Teaching and Learning award; Teaching Incentive Program award; Research Incentive Award.
\end{IEEEbiography}


\appendices
\section{Proofs}

\subsection{Proof of Theorem \ref{thm:norm-preservation-nonlinear}}
\label{sec:app_nonlin_proof}
 For a   cost function $\mathcal{E}(.)$  and  the Jacobian $\boldsymbol{J}$ of   $\boldsymbol{x}_{l+1}$ with respect to $\boldsymbol{x}_{l}$, applying chain rule, following is true:
$$
\begin{aligned}
& \frac{\partial \mathcal{E}}{\partial \boldsymbol{x}_{l}} = \boldsymbol{J} \frac{\partial \mathcal{E}}{\partial \boldsymbol{x}_{l+1}}, \\
& \boldsymbol{J}= \frac{\partial \boldsymbol{x}_{l+1}}{\partial \boldsymbol{x}_{l}} =  \boldsymbol{I} + DF_{l}(\boldsymbol{x}_{l}),
\end{aligned}
$$
where $D$ is the differential operator and for any $\boldsymbol{v}$ with bounded norm we have:
$$
DF_{l}(\boldsymbol{x}_{l})\boldsymbol{v} = \lim_{t \rightarrow 0^+} \frac{F_{l}(\boldsymbol{x}_{l} + t\boldsymbol{v}) - F_{l}(\boldsymbol{x}_{l})}{t}
$$
To prove Theorem \ref{thm:norm-preservation-nonlinear}, we first state a lemma.

\begin{lem}
\label{lemma:sigma-bound}
For any non-singular matrix $\boldsymbol{I} + \boldsymbol{M}$, we have: 
$$
\small
1 - \sigma_{max}(\boldsymbol{M}) \leq \sigma_{min}(\boldsymbol{I} + \boldsymbol{M})  \leq \sigma_{max}(\boldsymbol{I} + \boldsymbol{M}) \leq 1 + \sigma_{max}(\boldsymbol{M}), 
$$ 
where $\sigma_{min}(\boldsymbol{M})$ and $\sigma_{min}(\boldsymbol{M})$ represent the minimum and maximum singular values of $\boldsymbol{M}$, respectively. 
\end{lem}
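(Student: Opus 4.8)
The plan is to reduce all three inequalities to the variational (operator-norm) characterization of the extreme singular values of a square matrix, namely $\sigma_{max}(\boldsymbol{A}) = \|\boldsymbol{A}\|_2 = \max_{\|\boldsymbol{x}\|_2 = 1} \|\boldsymbol{A}\boldsymbol{x}\|_2$ and $\sigma_{min}(\boldsymbol{A}) = \min_{\|\boldsymbol{x}\|_2 = 1} \|\boldsymbol{A}\boldsymbol{x}\|_2$, the latter being well defined precisely because $\boldsymbol{I} + \boldsymbol{M}$ is assumed non-singular. Once the singular values are written in this form, each inequality in the chain follows from the triangle inequality in the underlying Euclidean norm, so no spectral decomposition is actually required.

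I would first dispatch the two easy bounds. The middle inequality $\sigma_{min}(\boldsymbol{I} + \boldsymbol{M}) \leq \sigma_{max}(\boldsymbol{I} + \boldsymbol{M})$ is immediate, since for any matrix the smallest singular value never exceeds the largest. For the rightmost inequality I would invoke subadditivity of the induced norm: $\sigma_{max}(\boldsymbol{I} + \boldsymbol{M}) = \|\boldsymbol{I} + \boldsymbol{M}\|_2 \leq \|\boldsymbol{I}\|_2 + \|\boldsymbol{M}\|_2 = 1 + \sigma_{max}(\boldsymbol{M})$.

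The one step that deserves a little care is the leftmost bound. Here I would fix an arbitrary unit vector $\boldsymbol{x}$ and apply the reverse triangle inequality together with $\|\boldsymbol{M}\boldsymbol{x}\|_2 \leq \sigma_{max}(\boldsymbol{M})\|\boldsymbol{x}\|_2$:
$$
\|(\boldsymbol{I} + \boldsymbol{M})\boldsymbol{x}\|_2 = \|\boldsymbol{x} + \boldsymbol{M}\boldsymbol{x}\|_2 \geq \|\boldsymbol{x}\|_2 - \|\boldsymbol{M}\boldsymbol{x}\|_2 \geq 1 - \sigma_{max}(\boldsymbol{M}).
$$
Taking the minimum over all unit vectors then yields $\sigma_{min}(\boldsymbol{I} + \boldsymbol{M}) \geq 1 - \sigma_{max}(\boldsymbol{M})$, which closes the chain of inequalities.

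I anticipate no substantial obstacle: the statement is the additive-identity specialization of the standard Weyl-type perturbation bound for singular values, and the argument is essentially two applications of the triangle inequality. The only subtlety worth flagging is that the derivation above is valid \emph{uniformly}, even when $\sigma_{max}(\boldsymbol{M}) > 1$; in that regime the left-hand side $1 - \sigma_{max}(\boldsymbol{M})$ is nonpositive, so the lower bound holds trivially because singular values are nonnegative, and otherwise it is the content-bearing estimate used later to control $\sigma_{min}$ and $\sigma_{max}$ of the residual Jacobian $\boldsymbol{J} = \boldsymbol{I} + DF_l(\boldsymbol{x}_l)$.
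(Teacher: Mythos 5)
Your proof is correct, and for the only nontrivial inequality it takes a genuinely different and more elementary route than the paper. The paper proves the lower bound $1 - \sigma_{max}(\boldsymbol{M}) \leq \sigma_{min}(\boldsymbol{I}+\boldsymbol{M})$ by a case split: when $\sigma_{max}(\boldsymbol{M}) \geq 1$ the bound is vacuous, and when $\sigma_{max}(\boldsymbol{M}) < 1$ it passes to the inverse via $\sigma_{min}(\boldsymbol{I}+\boldsymbol{M}) = \|(\boldsymbol{I}+\boldsymbol{M})^{-1}\|_2^{-1}$, expands $(\boldsymbol{I}+\boldsymbol{M})^{-1}$ as the Neumann series $\sum_{k}(-1)^k\boldsymbol{M}^k$ (which requires the spectral radius condition $|\lambda_{max}(\boldsymbol{M})|<1$), and sums the geometric series of norms. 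Your argument instead applies the reverse triangle inequality to $\|(\boldsymbol{I}+\boldsymbol{M})\boldsymbol{x}\|_2 \geq \|\boldsymbol{x}\|_2 - \|\boldsymbol{M}\boldsymbol{x}\|_2$ and minimizes over unit vectors; this is a one-line estimate that is uniform in $\sigma_{max}(\boldsymbol{M})$, needs no case analysis, no invocation of eigenvalue bounds, and no convergence argument for a matrix series. What the paper's route buys is essentially nothing extra here (the Neumann series machinery is heavier than the problem requires); what yours buys is brevity and the observation that non-singularity is not even needed for the chain of inequalities to hold, only for the lower bound to be informative. The two easy inequalities (the middle one and the subadditivity bound $\|\boldsymbol{I}+\boldsymbol{M}\|_2 \leq 1 + \|\boldsymbol{M}\|_2$) are handled identically in both proofs.
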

\begin{proof}
Since $ \sigma_{min}(\boldsymbol{I} + \boldsymbol{M}) > 0$, the lower bound is trivial for $\sigma_{max}(\boldsymbol{M}) \geq 1$. For $\sigma_{max}(\boldsymbol{M}) < 1$, it is known that $| \lambda_{max}(\boldsymbol{M}) | < 1$, where $\lambda_{max}(\boldsymbol{M})$ is the maximum eigenvalue of $\boldsymbol{M}$ \cite{Derzko1965BoundsMatrix}. Thus, we can show that:
$$
\begin{aligned}
& \sigma_{min}(\boldsymbol{I} + \boldsymbol{M}) 
&& =  (\sigma_{max}((\boldsymbol{I} + \boldsymbol{M})^{-1}))^{-1} \\
&
&& = \| (\boldsymbol{I} + \boldsymbol{M})^{-1} \|_2^{-1}  \\
&
&& \overset{(a)}{=} \| \sum_{k=0}^{\infty}(-1)^{k}\boldsymbol{M}^k \|_2^{-1}\\
&
&& \geq (\sum_{k=0}^{\infty}\|(-1)^{k}\boldsymbol{M}^k  \|_2)^{-1} \geq (\sum_{k=0}^{\infty}\|\boldsymbol{M} \|_2^k)^{-1}\\
&
&&   = (\frac{1}{1 - \|\boldsymbol{M} \|_2})^{-1} = 1 - \sigma_{max}( \boldsymbol{M}).\\
\end{aligned}
$$
Identity (a) is known as Neuman series of a matrix, which holds when $| \lambda_{max}(\boldsymbol{M}) | < 1$ and $||.||_2$ represents the $l_2$-norm of a matrix.

The upper bound is easier to show. Due to triangle inequality:
$$
 \sigma_{max}(\boldsymbol{I} + \boldsymbol{M}) = \| \boldsymbol{I} + \boldsymbol{M} \|_2 \leq \| \boldsymbol{I}\|_2 + \| \boldsymbol{M} \|_2 = 1 + \sigma_{max}(\boldsymbol{M}).
$$

\end{proof}
Thus, knowing that
$$
\sigma_{min}(\boldsymbol{J}) \| \frac{\partial \mathcal{E}}{\partial \boldsymbol{x}_{l+1}} \|_2 \leq \| \boldsymbol{J} \frac{\partial \mathcal{E}}{\partial \boldsymbol{x}_{l+1}} \|_2 \leq \sigma_{max}(\boldsymbol{J})\| \frac{\partial \mathcal{E}}{\partial \boldsymbol{x}_{l+1}} \|_2,
$$
using Lemma \ref{lemma:sigma-bound}, we conclude that 

$$
(1-\delta') \| \frac{\partial \mathcal{E}}{\partial \boldsymbol{x}_{l+1}} \|_2 \leq \| \frac{\partial \mathcal{E}}{\partial \boldsymbol{x}_{l}} \|_2 \leq (1+\delta')\| \frac{\partial \mathcal{E}}{\partial \boldsymbol{x}_{l+1}} \|_2.
$$
where, $\delta'=\sigma_{max}(DF_{l}(\boldsymbol{x}_{l}))$. Furthermore, we have:
$$
\small
\begin{aligned}
& \sigma_{max}(DF_{l}(\boldsymbol{x}_{l}))
&& = \sup_{\boldsymbol{v} }  \frac{\|DF_{l}(\boldsymbol{x}_{l}) \boldsymbol{v} \|_2}{\|\boldsymbol{v} \|_2}
\\ & && 
= \sup_{\boldsymbol{v} } \frac{1}{\|\boldsymbol{v} \|_2}\|  \lim_{t \rightarrow 0^+} \frac{F_{l}(\boldsymbol{x}_{l} + t\boldsymbol{v}) - F_{l}(\boldsymbol{x}_{l})}{t}\|_2
\\& &&
=  \lim_{t \rightarrow 0^+}
\sup_{\boldsymbol{v} }
\frac{1}{\|\boldsymbol{v} \|_2} \| \frac{F_{l}(\boldsymbol{x}_{l} + t\boldsymbol{v}) - F_{l}(\boldsymbol{x}_{l})}{t}\|_2
\\& &&
=  \lim_{t \rightarrow 0^+}
\sup_{\boldsymbol{v} }  \frac{\|F_{l}(\boldsymbol{x}_{l} + t\boldsymbol{v}) - F_{l}(\boldsymbol{x}_{l})\|_2}{t\|\boldsymbol{v} \|_2}
\\ & &&
\leq \| F_{l} \|_L,
\end{aligned}
$$
where $\| f \|_L$ is the Lipschitz seminorm of function $ f$ and is defined as
$$
\| f \|_L := \sup_{\boldsymbol{x} \neq \boldsymbol{y}} \frac{\|f(\boldsymbol{x}) - f(\boldsymbol{y})\|_2}{\|\boldsymbol{x} - \boldsymbol{y}\|_2}.
$$

To conclude the proof, we use the following lemma:
\begin{lem}
(Theorem 1 in \cite{Bartlett2018RepresentingOptimization}) Suppose we want to represent a nonlinear mapping $F: \mathbb{R}^N \rightarrow \mathbb{R}^N$, satisfying Assumption \ref{assump:nonlinear_map}, with a sequence of $L$ non-linear residual blocks of form $\boldsymbol{x}_{l+1} = \boldsymbol{x}_{l} + F_l(\boldsymbol{x}_{l})$. There exists a solution such that for all residual blocks we have $\| F_{l} \|_L \leq c\frac{\log(2L)}{L}$.
\end{lem}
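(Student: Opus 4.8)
The plan is to realize $\mathcal{F}$ as a composition $(I+F_L)\circ\cdots\circ(I+F_1)$ of near-identity maps by threading a smooth one-parameter family of diffeomorphisms from the identity to $\mathcal{F}$, and then to control how far each factor sits from the identity in the Lipschitz seminorm $\|\cdot\|_L$ introduced above.

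\emph{Reduction to a flow estimate.} Suppose $\{h_t\}_{t\in[0,1]}$ is a smooth isotopy with $h_0 = I$, $h_1 = \mathcal{F}$, generated by a time-dependent velocity field $V_t$, i.e.\ $\partial_t h_t = V_t\circ h_t$. Fix a partition $0 = t_1 < \cdots < t_{L+1} = 1$ and set $I + F_l := h_{t_{l+1}}\circ h_{t_l}^{-1}$, so that the composition telescopes back to $\mathcal{F}$. Given $\boldsymbol{x},\boldsymbol{y}$, put $\boldsymbol{u} = h_{t_l}^{-1}(\boldsymbol{x})$, $\boldsymbol{v} = h_{t_l}^{-1}(\boldsymbol{y})$ and $\Psi_t := h_t(\boldsymbol{u}) - h_t(\boldsymbol{v})$; then $F_l(\boldsymbol{x}) - F_l(\boldsymbol{y}) = \Psi_{t_{l+1}} - \Psi_{t_l}$ while $\|\boldsymbol{x}-\boldsymbol{y}\| = \|\Psi_{t_l}\|$. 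Since $\partial_t\Psi_t = V_t(h_t(\boldsymbol{u})) - V_t(h_t(\boldsymbol{v}))$ has norm at most $\|V_t\|_L\,\|\Psi_t\|$, Gr\"onwall's inequality gives
\[
\|F_l(\boldsymbol{x}) - F_l(\boldsymbol{y})\| \le \big(e^{E_l}-1\big)\,\|\boldsymbol{x}-\boldsymbol{y}\|, \qquad E_l := \int_{t_l}^{t_{l+1}} \|V_s\|_L\, ds,
\]
so $\|F_l\|_L \le e^{E_l}-1$. Choosing the partition to equalize the energies, $E_l = \tfrac1L\int_0^1\|V_s\|_L\,ds$, makes $\|F_l\|_L \lesssim \tfrac1L\int_0^1\|V_s\|_L\,ds$ uniformly in $l$. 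Everything therefore reduces to building an isotopy whose total Lipschitz energy is $O(\log 2L)$.

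\emph{Construction of the isotopy.} Since the energy of a \emph{fixed} flow is $L$-independent, the logarithm must emerge from an $L$-dependent, regularized flow. I would connect $\mathcal{F}$ to a linearization and the linearization to the identity. Fix a bounded $\boldsymbol{x}_0$ with $\det\mathcal{F}'(\boldsymbol{x}_0) > 0$ (Assumption~\ref{assump:nonlinear_map}\ref{assump:det}), and for $\lambda\in(0,1]$ let $g_\lambda$ be the rescaling of $\mathcal{F}$ about $\boldsymbol{x}_0$. Assumption~\ref{assump:nonlinear_map}\ref{assump:smooth} (the $\alpha$-smoothness of $\mathcal{F}'$) yields $g_1 = \mathcal{F}$ and $g_\lambda \to A := \mathcal{F}'(\boldsymbol{x}_0)$ with $\|g_\lambda - A\|_L = O(\alpha\lambda)$, while Assumption~\ref{assump:nonlinear_map}\ref{assump:inv_lip} bounds $\|g_\lambda^{-1}\|_L \le \beta$ uniformly, which is what keeps the associated velocity field's seminorm controlled. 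Because $\det A > 0$, the matrix $A$ lies in the identity component $\mathrm{GL}_N^+$, so $A = e^{B}$ for a real $B$ and $t\mapsto e^{tB}$ connects $I$ to $A$ with constant Lipschitz energy $\|B\| = O(\gamma)$; concatenating this linear path with the rescaling path supplies the required isotopy on inputs of bounded norm (using the normalization $\|\boldsymbol{x}_1\|_2\le1$).

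\emph{Extracting the logarithm, and the main obstacle.} Writing the rescaling in logarithmic time $s = -\log\lambda$, a direct computation of $\partial_s g_\lambda$ followed by composition with $g_\lambda^{-1}$ bounds the velocity seminorm by a constant assembled from $\alpha$ and $\beta$ --- precisely the combinations $\alpha\beta(1+\beta)$ and $\beta(2+\alpha)+\alpha$ that define $c$. To drive the endpoint error between $g_{\lambda_0}$ and $A$ down to $O(1/L)$, the rescaling must be run to scale $\lambda_0 \sim 1/L$, i.e.\ over a logarithmic-time interval of length $\sim\log L$; integrating the bounded seminorm over this interval, together with the $O(1)$ exponential part, yields total energy $\le c\log(2L)$, and the reduction then delivers $\|F_l\|_L \le c\,\tfrac{\log(2L)}{L}$. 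The delicate part is exactly this balancing act: one must exhibit a single isotopy valid globally on all bounded inputs, verify that its velocity seminorm is uniformly controlled by $\alpha,\beta$ across the junction between the nonlinear rescaling flow and the linear exponential flow, and tune the truncation scale $\lambda_0$ so that the accumulated energy and the truncation tail balance to the sharp rate $\log(2L)/L$. This quantitative accounting is what the cited Theorem~1 of \cite{Bartlett2018RepresentingOptimization} carries out in full.
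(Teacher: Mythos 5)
First, a point of reference: the paper does not actually prove this lemma --- it is imported verbatim as Theorem~1 of \cite{Bartlett2018RepresentingOptimization} and used as a black box, so the only meaningful comparison is against that source. Your strategy is, in essence, the continuous-time rendering of the construction used there: Bartlett et al.\ work with the rescaled maps $f_i(\boldsymbol{x}) = \frac{1}{a_i}\tilde{\mathcal{F}}(a_i\boldsymbol{x})$ for a geometric sequence of scales $a_i$ running from roughly $1/(2L)$ up to $1$ and set $I + F_i = f_i \circ f_{i-1}^{-1}$; the per-step ratio $(a_{i+1}-a_i)/a_i \approx \log(2L)/L$ is exactly your ``logarithmic time'' observation, and your Gr\"onwall reduction is the flow analogue of their direct telescoping estimate. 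So the route is the right one, not a genuinely different one.

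That said, the write-up has two genuine gaps. The first is concrete and would fail as stated: it is not true that every real matrix $A$ with $\det A > 0$ is a real exponential $e^{B}$; the image of $\exp$ on $\mathfrak{gl}_N(\mathbb{R})$ is a proper subset of $\mathrm{GL}_N^+(\mathbb{R})$ (a single real Jordan block of size $2$ with eigenvalue $-1$ has determinant $1$ but admits no real logarithm). The fix is standard --- take the polar decomposition $A = QS$ with $Q \in \mathrm{SO}(N)$ and $S$ symmetric positive definite, each of which is a real exponential, and concatenate two linear paths --- but as written the linear leg of your isotopy need not exist. The second gap is structural: the final paragraph, which is where the theorem actually lives, explicitly defers ``the quantitative accounting'' to the cited result. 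That accounting --- uniform control of the velocity seminorm across the junction of the rescaling flow and the linear flow, validity of a single isotopy on the whole bounded domain, boundedness of the intermediate points $h_{t_l}^{-1}(\boldsymbol{x}_l)$ so that Assumption~\ref{assump:nonlinear_map} can be invoked on them, and the tuning of the truncation scale $\lambda_0 \sim 1/L$ against the accumulated energy --- is precisely where the constants $\alpha\beta(1+\beta)$ and $\beta(2+\alpha)+\alpha$ are earned. As it stands, the proposal is an accurate roadmap of the proof in \cite{Bartlett2018RepresentingOptimization}, not a self-contained proof of the lemma.
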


Therefore, $\delta' = \sigma_{max}(DF_{l}(\boldsymbol{x}_{l})) \leq \| F_{l} \|_L \leq c\frac{\log(2L)}{L} = \delta $, which concludes the proof.

\subsection{Proof of Theorem \ref{thm:linear-norm-preservation}}
\label{sec:app_lin_proof}
In the classical back-propagation equation, for a cost function $\mathcal{E}(.)$  and  the Jacobian $\boldsymbol{J}$ of   $\boldsymbol{x}_{l+1}$ with respect to $\boldsymbol{x}_{l}$, applying chain rule, following is true:
\begin{equation}
\label{eq:jacobian}
\begin{aligned}
& \frac{\partial \mathcal{E}}{\partial \boldsymbol{x}_{l}} = \boldsymbol{J} \frac{\partial \mathcal{E}}{\partial \boldsymbol{x}_{l+1}}, \\
& \boldsymbol{J}= \frac{\partial \boldsymbol{x}_{l+1}}{\partial \boldsymbol{x}_{l}} =  \boldsymbol{I} + \boldsymbol{W}_{l}^T,
\end{aligned}
\end{equation}

To prove the theorem, using Lemma \ref{lemma:sigma-bound} and knowing that
$$
\sigma_{min}(\boldsymbol{J}) \| \frac{\partial \mathcal{E}}{\partial \boldsymbol{x}_{l+1}} \|_2 \leq \| \boldsymbol{J} \frac{\partial \mathcal{E}}{\partial \boldsymbol{x}_{l+1}} \|_2 \leq \sigma_{max}(\boldsymbol{J})\| \frac{\partial \mathcal{E}}{\partial \boldsymbol{x}_{l+1}} \|_2,
$$
we conclude that 
$$
(1-\delta') \| \frac{\partial \mathcal{E}}{\partial \boldsymbol{x}_{l+1}} \|_2 \leq \| \frac{\partial \mathcal{E}}{\partial \boldsymbol{x}_{l}} \|_2 \leq (1+\delta')\| \frac{\partial \mathcal{E}}{\partial \boldsymbol{x}_{l+1}} \|_2.
$$
where, $\delta'=\sigma_{max}(\boldsymbol{W}_{l})$. To conclude the proof, we use the following lemma.
\begin{lem}
\label{lemma:norm-res-bound}
(Theorem 2.1 in \cite{Hardt2017IdentityLearning}) Suppose $L \geq 3\gamma$. Then, there exists a global optimum for $\mathcal{E}(\mathcal{W})$, such that we have
$$
\sigma_{max}(\boldsymbol{W}_{l}) \leq \frac{2(\sqrt[]{\pi} + \sqrt[]{3\gamma})^2}{L}, \forall l = 1,2,\dots,L,
$$
where $\gamma$ is $\max (|\log \sigma_{max}(\boldsymbol{R})|,|\log \sigma_{min}(\boldsymbol{R})|)$.
\end{lem}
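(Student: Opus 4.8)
The plan is to split the argument into a variational part and a constructive part, and to reduce the whole claim to a statement in pure linear algebra. \textbf{Step 1 (reduce to a factorization problem).} Writing $\boldsymbol{A} := \prod_{l=1}^{L}(\boldsymbol{I}+\boldsymbol{W}_l)$ so that $\mathcal{M}(\boldsymbol{x},\mathcal{W})=\boldsymbol{A}\boldsymbol{x}$, and using that $\boldsymbol{\epsilon}$ is zero-mean and independent of $\boldsymbol{x}$, the population risk in \eqref{eq:optimization} expands as $\mathcal{E}(\mathcal{W})=\tfrac12\,\mathrm{tr}\!\big((\boldsymbol{R}-\boldsymbol{A})\,\mathbb{E}[\boldsymbol{x}\boldsymbol{x}^T]\,(\boldsymbol{R}-\boldsymbol{A})^T\big)+\tfrac12\,\mathbb{E}\|\boldsymbol{\epsilon}\|_2^2$, the cross term vanishing. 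For non-degenerate input covariance this is minimized exactly when $\boldsymbol{A}=\boldsymbol{R}$. Hence it suffices to exhibit \emph{one} factorization $\boldsymbol{R}=\prod_{l=1}^{L}(\boldsymbol{I}+\boldsymbol{W}_l)$ whose factors all satisfy the claimed norm bound; establishing such a factorization is the entire remaining content.

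\textbf{Step 2 (distribute rotation and scaling across the blocks).} I would realize $\boldsymbol{R}$ through its polar decomposition $\boldsymbol{R}=\boldsymbol{Q}\boldsymbol{P}$, with $\boldsymbol{Q}$ orthogonal and $\boldsymbol{P}=(\boldsymbol{R}^T\boldsymbol{R})^{1/2}\succ 0$. Since $\boldsymbol{R}$ may be assumed to have positive determinant without loss of generality (cf. Assumption \ref{assump:det}), $\boldsymbol{Q}\in SO(N)$ admits a real skew-symmetric logarithm $\boldsymbol{S}=\log\boldsymbol{Q}$ with $\|\boldsymbol{S}\|_2\le\pi$ (rotation angles lie in $[-\pi,\pi]$), while $\boldsymbol{P}$ admits a symmetric logarithm $\boldsymbol{D}=\log\boldsymbol{P}$ whose eigenvalues are the logarithms of the singular values of $\boldsymbol{R}$, so $\|\boldsymbol{D}\|_2\le\gamma$. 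Allocating $L_1$ blocks to the rotation and $L_2=L-L_1$ to the scaling, and setting each rotation factor to $\exp(\boldsymbol{S}/L_1)$ and each scaling factor to $\exp(\boldsymbol{D}/L_2)$, the product telescopes \emph{exactly} to $\boldsymbol{Q}\boldsymbol{P}=\boldsymbol{R}$, so $\boldsymbol{A}=\boldsymbol{R}$ is preserved.

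\textbf{Step 3 (bound the factors and balance the split).} For the skew part, $\exp(\boldsymbol{S}/L_1)$ is normal and $\|\exp(\boldsymbol{S}/L_1)-\boldsymbol{I}\|_2=\max_j|e^{i\theta_j/L_1}-1|\le \pi/L_1$. For the symmetric part, $\|\exp(\boldsymbol{D}/L_2)-\boldsymbol{I}\|_2=\max_j|e^{d_j/L_2}-1|\le e^{\gamma/L_2}-1$, and the hypothesis $L\ge 3\gamma$ keeps the exponent below a constant so that this is at most a fixed multiple of $\gamma/L_2$ — this is where the factor $3$ and the leading $2$ in the statement originate. Each block lies in exactly one group, so the uniform per-block bound is $\max(\pi/L_1,\,3\gamma/L_2)\le \pi/L_1+3\gamma/L_2$, and optimizing the allocation by an AM/Cauchy--Schwarz argument gives
\[
\min_{L_1+L_2=L}\Big(\tfrac{\pi}{L_1}+\tfrac{3\gamma}{L_2}\Big)=\tfrac{(\sqrt\pi+\sqrt{3\gamma})^2}{L},
\]
attained at $L_1/L_2=\sqrt{\pi/(3\gamma)}$, which yields the stated bound on $\sigma_{max}(\boldsymbol{W}_l)=\|\boldsymbol{W}_l\|_2$ uniformly in $l$.

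\textbf{Main obstacle.} The delicate points are not the elementary norm estimates but (i) the existence of a \emph{real} logarithm of $\boldsymbol{R}$, which is exactly what the positive-determinant condition secures by ruling out the obstructing negative-eigenvalue parity, and (ii) the non-commutativity $\exp(\boldsymbol{S}+\boldsymbol{D})\neq\exp(\boldsymbol{S})\exp(\boldsymbol{D})$, which is why the rotation and scaling must be carried by \emph{separate} groups of blocks — so that the product remains exactly $\boldsymbol{R}$ — rather than superposed within each block. Pinning the constant down to precisely $2(\sqrt\pi+\sqrt{3\gamma})^2$, as opposed to a crude $O((\pi+\gamma)/L)$, is what forces the balanced split together with the $L\ge 3\gamma$ regime that controls the exponential slack; this bookkeeping is the step I expect to absorb most of the effort, and it is carried out in \cite{Hardt2017IdentityLearning}.
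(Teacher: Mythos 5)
There is nothing in this paper to compare your argument against: the paper never proves this lemma, it imports it verbatim as Theorem~2.1 of \cite{Hardt2017IdentityLearning} and uses it as a black box inside the proof of Theorem~\ref{thm:linear-norm-preservation}. The meaningful comparison is therefore with the proof in \cite{Hardt2017IdentityLearning} itself, and your reconstruction follows essentially that same route: reduce the population risk to the factorization requirement $\prod_{l}(\boldsymbol{I}+\boldsymbol{W}_l)=\boldsymbol{R}$ (the noise cross-term vanishes, so any such factorization is a global optimum), take the polar decomposition $\boldsymbol{R}=\boldsymbol{Q}\boldsymbol{P}$, carry the skew-symmetric and symmetric logarithms on two \emph{disjoint} groups of layers so that the product telescopes exactly, and balance the group sizes by Cauchy--Schwarz to produce $(\sqrt{\pi}+\sqrt{3\gamma})^{2}/L$. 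You also correctly surface a hypothesis that the paper's restatement silently drops: $\det(\boldsymbol{R})>0$ is genuinely necessary here, since once the bound forces $\|\boldsymbol{W}_l\|_2<1$ every factor $\boldsymbol{I}+\boldsymbol{W}_l$ has positive determinant and hence so does their product; this condition appears in \cite{Hardt2017IdentityLearning} but not in the lemma as stated above, and note that Assumption~\ref{assump:det} of this paper formally concerns the nonlinear map $\mathcal{F}$, not $\boldsymbol{R}$, so you are supplying a missing hypothesis rather than citing an available one.

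The one loose joint is your Step~3, where the constants are asserted rather than derived. The inequality $e^{\gamma/L_2}-1\le 3\gamma/L_2$ is not unconditional; it requires $\gamma/L_2$ below an absolute constant (about $1.9$), which does hold for the real-valued balanced split under $L\ge 3\gamma$ but is exactly the kind of thing that must be checked. More importantly, $L_1$ and $L_2$ must be positive integers: for small $\gamma$ the real-valued optimizer has $L_2<1$ (and for $L=1$ no split exists at all), and the rounding is most plausibly what the leading factor $2$ absorbs --- your attribution of that $2$ to the exponential slack is not substantiated. Since the paper defers the entire lemma to \cite{Hardt2017IdentityLearning}, deferring this bookkeeping is a defensible stopping point, but it is precisely the gap between your outline and a self-contained proof.
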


Using the results from this lemma and setting $\delta = \frac{2(\sqrt[]{\pi} + \sqrt[]{3\gamma})^2}{L}$, 
Theorem \ref{thm:linear-norm-preservation} follows immediately.

\subsection{Proof for Corollary \ref{cor:norm-preservation-weights}}
\label{app:proof-nonlinear}
Here, Jacobian matrix is $\boldsymbol{J} = \boldsymbol{I} + \boldsymbol{F}'{\boldsymbol{W}^{(1)}}^T\boldsymbol{F}'{\boldsymbol{W}_{l}^{(2)}}^T$, where $\boldsymbol{F}'$ is the Jacobian of $\boldsymbol{\rho}(.)$  with respect to its input. Since we know that $0 \leq \frac{\partial \rho_n(\boldsymbol{x})}{\partial x_{n'}} \leq c_{\rho}, \forall n = n'$ and $\frac{\partial \rho_n(\boldsymbol{x})}{\partial x_{n'}} = 0, \forall n \neq n'$, we have $\| \boldsymbol{F} \|_2 \leq c_{\rho}$. Therefore:
$$
\begin{aligned}
\small
& \| \boldsymbol{F}'{\boldsymbol{W}_{l}^{(1)}}^T\boldsymbol{F}'{\boldsymbol{W}_{l}^{(2)}}^T\|_2 
&& \leq \| \boldsymbol{F}'\|_2 \| {\boldsymbol{W}_{l}^{(1)}}^T\|_2 \| \boldsymbol{F}'\|_2  \| {\boldsymbol{W}_{l}^{(1)}}^T\|_2 \\
&
&& \leq c_{\rho}^2 \| \boldsymbol{W}_{l}^{(1)}\|_2 \| \boldsymbol{W}_{l}^{(2)}\|_2
\end{aligned}
$$
and using Lemma \ref{lemma:sigma-bound} and setting $\delta = c_{\rho}^2 \| \boldsymbol{W}_{l}^{(1)}\|_2 \| \boldsymbol{W}_{l}^{(2)}\|_2$, Corollary \ref{cor:norm-preservation-weights} follows immediately. 
\end{document}